\def\eqref#1{equation~\ref{#1}}
\def\1{\bm{1}}
\DeclareMathAlphabet{\mathsfit}{\encodingdefault}{\sfdefault}{m}{sl}
\SetMathAlphabet{\mathsfit}{bold}{\encodingdefault}{\sfdefault}{bx}{n}
\newcommand{\R}{\mathbb{R}}
\DeclareMathOperator{\Tr}{Tr}
\definecolor{best}{RGB}{175,216,230} 
\definecolor{secondbest}{RGB}{220,220,220} 
\newcommand{\Graphnorm}{\text{GraphNormv2}}
\newcommand{\graphnorm}{\textit{graphv2}}
\newcommand{\0}{\mathbf{0}}
\newcommand{\norm}[1]{\left\| #1 \right\|}
\DeclareMathOperator{\BN}{BN}
\DeclareMathOperator{\diag}{diag}
\DeclareMathOperator{\bn}{bn}
\DeclareMathOperator{\Rank}{Rank}
\DeclareMathOperator{\adj}{adj}
\DeclareMathOperator{\Span}{Span}
\DeclareMathOperator{\Kr}{Kr}
\theoremstyle{plain}
\newtheorem{theorem}{Theorem}[section]
\newtheorem{proposition}[theorem]{Proposition}
\newtheorem{lemma}[theorem]{Lemma}
\theoremstyle{definition}
\newtheorem{assumption}[theorem]{Assumption}
\newtheorem{remark}[theorem]{Remark}
\title{Residual Connections and Normalization Can Provably Prevent Oversmoothing in GNNs}
\author{%
Michael Scholkemper$^\ast$ \\
Department of Computer Science \\
RWTH Aachen University \\
\texttt{scholkemper@cs.rwth-aachen.de} \\
\And
Xinyi Wu\thanks{Both authors contributed equally.} \\
Institute for Data, Systems, and Society \\
Massachusetts Institute of Technology \\
\texttt{xinyiwu@mit.edu} \\
\AND
Ali Jadbabaie \\
Institute for Data, Systems, and Society \\
Massachusetts Institute of Technology \\
\texttt{jadbabai@mit.edu} \\
\And 
\hspace{1.35cm}Michael T. Schaub \\
\hspace{1.35cm}Department of Computer Science \\
\hspace{1.35cm}RWTH Aachen University \\
\hspace{1.35cm}\texttt{schaub@cs.rwth-aachen.de} \\
}
\begin{document}

\maketitle

\begin{abstract}
Residual connections and normalization layers have become standard design choices for graph neural networks (GNNs), and were proposed as solutions to the mitigate the oversmoothing problem in GNNs. 
However, how exactly these methods help alleviate the oversmoothing problem from a theoretical perspective is not well understood. 
In this work, we provide a formal and precise characterization of (linearized) GNNs with residual connections and normalization layers. 
We establish that (a) for residual connections, the incorporation of the initial features at each layer can prevent the signal from becoming too smooth, and determines the subspace of possible node representations;
(b)~batch normalization prevents a complete collapse of the output embedding space to a one-dimensional subspace through the individual rescaling of each column of the feature matrix. This results in the convergence of node representations to the top-$k$ eigenspace of the message-passing operator;
(c)~moreover, we show that the centering step of a normalization layer --- which can be understood as a projection --- alters the graph signal in message-passing in such a way that relevant information can become harder to extract. Building on the last theoretical insight, we introduce~\Graphnorm, a novel and principled normalization layer. \Graphnorm~features a learnable centering step designed to preserve the integrity of the original graph signal. Experimental results corroborate the effectiveness of our method, demonstrating improved performance across various GNN architectures and tasks.
\end{abstract}

\section{Introduction}
In recent years, graph neural networks (GNNs) have gained significant popularity due to their ability to process complex graph-structured data and extract features in an end-to-end trainable fashion~\citep{Gori2005GNN, Scarselli2009TheGN,Bruna2014SpectralNA,Duvenaud2015ConvolutionalNO, Defferrard2016ConvolutionalNN, kipf2016semi, Velickovic2017GraphAN}. They have shown empirical success in a highly diverse set of problems~\citep{Battaglia2016Interaction, Wu2020GraphNN, xu2018powerful, morris2019weisfeiler, Jha2022PredictionOP, Cappart2021CombinatorialOA}.
Most GNNs follow a message-passing paradigm~\citep{Gilmer2017NeuralMP}, where node representations are learned by recursively aggregating and transforming the representations of the neighboring nodes. 
Through repeated message-passing on the graph, the graph information is implicitly incorporated.

A prevalent problem in message-passing GNNs is their tendency to oversmooth~\citep{oono2019graph,cai2021graphnorm,wu2023demystifying,Keriven2022NotTL}, which refers to the observation that node signals (or representations) tend to contract to a one-dimensional subspace as the number of layers increases. 
While a certain amount of smoothing is desirable to average out noise in the node features and render information extraction from the graph more reliable~\citep{Keriven2022NotTL, Wu2022ANA},
excessive smoothing leads to an information loss as node signals become virtually indistinguishable and can thus not be exploited for downstream tasks.
For this reason, most GNN architectures use only few message-passing layers~\citep{kipf2016semi, Velickovic2017GraphAN, xu2018powerful}.
This contrasts with the trend in deep learning, where much deeper architectures are considered preferable~\citep{He2015DeepRL, Kaplan2020ScalingLF} 

To mitigate the oversmoothing problem for GNNs, several practical solutions have been proposed. 
In particular, both residual connections~\citep{Chen2020SimpleAD, Klicpera2019PredictTP} and normalization layers~\citep{Zhao2019PairNormTO, Zhou2020UnderstandingAR,Guo2023ContraNormAC} have been specifically proposed to address the oversmoothing problem.
Despite empirical observations that the introduction of these methods can alleviate oversmoothing and enable deeper architectures, a theoretical analysis on the effects of these methods on oversmoothing and the resultant expressive power of GNNs is still lacking. 
More technically, the analysis of oversmoothing typically relies on a single node similarity measure as a description of the whole underlying system~\citep{oono2019graph, Cai2020ANO,wu2023demystifying}. 
Such measures, however, can only identify a complete collapse of the node signals to one-dimensional subspace and are unable to capture the more refined geometry of the system beyond a complete collapse. 
These observations motivate the following questions:
\begin{displayquote}
\textit{How do residual connections and normalization layers affect oversmoothing and therefore the practical expressive power of GNNs? How do they compare?}
\end{displayquote}
In this work, we answer the above questions with a refined characterization of the underlying system of linearized GNNs~\citep{Keriven2022NotTL, Wu2022ANA} with residual connections and normalization layers.
In particular, we establish that both methods can alleviate oversmoothing by preventing node signals from a complete collapse to a one-dimensional subspace, contrary to the case for standard GNNs~\citep{oono2019graph,Cai2020ANO,wu2023demystifying}.

\textbf{Our contributions can be summarized as follows:}
\begin{itemize}[leftmargin=3ex]
    \setlength\itemsep{-1pt}
    \item We characterize the system of linearized GNNs with residual connections. We show that residual connections prevent complete rank collapse to a one-dimensional subspace by incorporating the initial features at each step. As a result, the subspace of possible node representations that a GNN can compute is determined by the initial features.
    \item We analyze the system of linearized GNNs with normalization layers and show that normalization layers prevent oversmoothing of node signals through the scaling step. Nonetheless, the node representations exponentially converge to the top-$k$ eigenspace of the message-passing operator.
    \item We separately identify the role of the centering step in normalization layers. 
    Our results suggest that the centering step can distort 
    the message-passing in undesired ways. 
    Consequently, relevant graph information becomes dampened and can even be lost in message-passing.
    \item Based on these theoretical findings, we propose a novel normalization layer named \Graphnorm~that learns a centering operation that does not distort the subspaces of the message-passing operator in an uncontrolled way, but contains a learnable projection. 
    Experimental results show the effectiveness of our proposed method.
\end{itemize}

\section{Related Work}
\paragraph{Oversmoothing in GNNs} Oversmoothing is a known challenge for developing deeper and more powerful GNNs, and many techniques have
been proposed to mitigate the issue in practice. Among them, residual connections~\citep{Chen2020SimpleAD} and normalization layers~\citep{Zhao2019PairNormTO, Zhou2020UnderstandingAR, Guo2023ContraNormAC} are popular methods that have empirically been shown to mitigate oversmoothing to certain extent. 
While many theoretical works on the underlying mechanism of oversmoothing exist~\citep{oono2019graph, cai2021graphnorm, Keriven2022NotTL, Wu2022ANA, wu2023demystifying}, these studies focus on standard GNNs without these additional modules. It is thus still an open question how residual connections and normalization can mitigate oversmoothing and subsequently affect the practical expressive power of GNNs from a theoretical perspective.
\vspace{-2ex}
\paragraph{Theoretical studies on residual connections and normalization in deep learning}
The empirical success of residual connections and normalization in enhancing training deep neural networks has inspired research into their underlying mechanisms~\citep{Arora2018TheoreticalAO,Daneshmand2021BatchNO, Yang2019AMF, Liu2019TowardsUT, Hardt2016IdentityMI,Esteve2020LargetimeAI}. 
Specifically, it has been shown that batch normalization avoids rank collapse for randomly initialized deep linear networks~\citep{Daneshmand2020BatchNP} and that residual connections alleviate rank collapse in transformers~\citep{Dong2021AttentionIN}. 
Rank collapse of node representations due to oversmoothing has also been a notable issue in building deeper GNNs.
However, a theoretical analysis of how residual connections and normalization layers combat oversmoothing and their additional effects on message-passing in GNNs is still lacking.

\section{Notation and Preliminaries}
\label{sec:prelim}

\paragraph{Notation}
We use the shorthand $[n]:=\{1,\ldots,n\}$. 
We denote the zero-vector by $\0$, the all-one vector by $\1$ and the identity matrix by $I$. 
Let $\norm{\cdot}_2$, $\norm{\cdot}_F$ be the $2$-norm and Frobenius norm, respectively. 
Lastly, for a matrix $M$, we denote its $i^{th}$ row by $M_{i, :}$ and $j^{th}$ column by $M_{:, j}$.

\paragraph{Graph Neural Networks}
Most message-passing GNN models --- which we will simply refer to as GNNs from now on --- can be described by the following update equation:
\begin{equation}
    \label{eq:GNN}
    X^{(t+1)} = \sigma (A X^{(t)} W^{(t)})\,,
\end{equation}
where $X^{(t)}, X^{(t+1)}\in\mathbb{R}^{n \times k}$ are the input and output node representations of the $t^{\text{th}}$ layer, respectively; $\sigma(\cdot)$ is an element-wise non-linearity such as the ReLU function; $W ^{(t)}\in \mathbb{R}^{k \times k}$ is a learnable linear transformation and $A \in \mathbb{R}^{n \times n}_{\geq 0}$ represents a message-passing operation and reflects the graph structure. 
For example, if $A$ is the graph adjacency matrix, $A = A_{\adj}$ we recover the Graph Isomorphism Network (GIN) \citep{xu2018powerful}, and for $A = D^{-\frac{1}{2}} A_{\adj} D ^{-\frac{1}{2}}$, where $D = \text{diag}(A_{\adj}\1)$ is the diagonal degree matrix, we obtain a Graph Convolutional Network (GCN) \citep{kipf2016semi}.

Throughout the paper, we assume $A$ to be symmetric and non-negative. Furthermore, we assume that eigenvalues $\lambda_i$ of a matrix are organized in a non-increasing order in terms of absolute value: $|\lambda_1|\geq |\lambda_2|\geq\cdots \geq |\lambda_n|$. For simplicity, we assume the columns of the initial features $X^{(0)}$ to be normalized, i.e. $||X^{(0)}_{:,i}||_2 = 1$.
\paragraph{Normalization Layers}
Normalization has been believed to be beneficial to deep neural networks for nearly two decades \citep{lecun2002efficient}. 
Most normalization layers perform a \textit{centering} operation and a \textit{scaling} operation on the input features. 
Centering usually consists of subtracting the mean, centering the features around zero along the chosen dimension. 
Scaling usually consists of scaling the features such that the features along the chosen dimension have unit variance. 
The two most popular approaches are \textit{batch} and \textit{layer} normalization~\citep{Ioffe2015BatchNA, Ba2016LayerN}. 
In our work, we focus on batch normalization (BatchNorm), denoted as $\BN(\cdot)$, which is defined as follows: Let $X \in \mathbb{R}^{n \times k}$, then
\begin{equation*}
    \label{eq:def_batchnorm}
     \BN(X) = \begin{bmatrix}
        \cdots, \bn(X_{:,i}), \cdots
     \end{bmatrix},
\end{equation*}
where $\bn(x) = (x-\sum_{i=1}^n x_i/n)/{\norm{x-\sum_{i=1}^n x_i/n}_2}\,.$
We note that in the case of GNNs, especially graph classification tasks, batches may contain nodes from different graphs. In our analysis, we consider only the nodes in the same graph for normalization and nodes in different graphs do not influence each other. This is sometimes called \textit{instance} normalization~\citep{Ulyanov2016InstanceNT}.

\paragraph{GraphNorm}
In \citet{cai2021graphnorm}, the authors proposed GraphNorm, a normalization layer specifically designed for GNNs that is like BatchNorm in terms of acting on the columns of the feature matrix. However, compared to BatchNorm, instead of subtracting the mean, the method learns to subtract an $\tau_j$ portion of the mean for the $j^{th}$ column:
\begin{equation*}
    \operatorname{GraphNorm}(X_{:,j}) = \gamma_j \frac{X_{:,j} -\tau_j \1\1^\top X_{:,j}/n}{\sigma_j} + \beta_j\,,
\end{equation*}
where $\sigma_j = \|X_{:,j} -\tau_j \1\1^\top X_{:,j}/n \|_2/\sqrt{n}$, $\gamma_j, \beta_j$ are the learnable affine parameters as in the implementation of other normalization methods~\citep{Ba2016LayerN, Ioffe2015BatchNA}. Notably, by introducing $\tau_j$ for each feature dimension $j$, ~\citep{cai2021graphnorm} claims an advantage of GraphNorm over the original batch normalization in that GraphNorm is able to learn how much of the information to keep in the mean rather than always subtracting the entire mean away.

\vspace{-2ex}
\paragraph{Weisfeiler-Leman and Structural Eigenvectors}
\label{sec: EP}
In theoretical studies about GNNs, an algorithm that comes up frequently is the so-called Weisfeiler-Leman (WL) algorithm \citep{weisfeiler1968reduction}. This algorithm iteratively assigns a color $c(v) \in \mathbb{N}$ to each node $v$ starting from a constant initial coloring $c^{(0)}(v) = 1$ for all nodes.
In each iteration, an update of the following form is computed: $c^{(t+1)}(v) = \text{hash}\left(c^{(t)}(v), \{\!\{c^{(t)}(x) \mid x \in \mathcal{N}(v)\}\!\}\right)$, where $\text{hash}$ is an injective hash function, $\{\!\{\cdot\}\!\}$ denotes a multiset in which elements can appear more than once, and $\mathcal{N}(v)$ is the set of neighboring nodes of $v$.
The algorithm returns the final colors $c^{(\infty)}$ when the partition $\{(c^{(t)})^{-1}(c^{(t)}(v)) | v \in V(G)\}$ no longer changes for consecutive update $t$.
For GNNs, \citet{morris2019weisfeiler} and~\citet{xu2018powerful} showed that for
graphs with constant
initial node features, GNNs cannot compute different features for nodes that are in the same class in the final coloring $c^{(\infty)}$.

For this paper, an equivalent algebraic perspective of the WL algorithm\footnote{This algebraic characterisation is also known as the \textit{coarsest equitable partition (cEP)} of the graph.} will be more useful (see \cref{sec:WL_extended} for a detailed discussion): Given $c^{(\infty)}$ with $\text{Im}(c^{(\infty)}) = \{c_1, .., c_k\}$, define $H \in \{0,1\}^{n \times k}$ such that $H_{v, i} = 1$ if and only if $c^{(\infty)}(v) = c_i$. It holds that
\begin{equation}
    AH = HA^{\pi}\,,
    \label{eq: quotient graph}
\end{equation}
where $A^{\pi}\in\mathbb{R}^{k\times k}$ is the adjacency matrix of the \emph{quotient graph}, which is fixed given $A$ and $H$. 
In words, a node in the quotient graph represents a class of nodes in the original graph who share the same number of neighbors in each final color. 

Most relevantly, the adjacency matrix $A$ of the original graph inherits all the eigenpairs from the quotient graph: 
If $\nu^{\pi}$ is the eigenvector of $A^{\pi}$ with eigenvalue $\lambda$, then $H\nu^{\pi}$ is an eigenvector of $A$ with eigenvalue $\lambda$. We call such eigenvectors of $A$ the \emph{structural eigenvectors}. 
These eigenvectors are important for understanding dynamical systems on graphs~\citep{schaub2016graph,yuan2013decentralised}, and play a role for centrality measures such as PageRank~\citep{sanchez2020exploiting} and others~\citep{Stamm2022NeighborhoodSC}.
In fact, from the results of \citet{morris2019weisfeiler} and \citet{xu2018powerful}, we can directly infer that given constant initial features, GNNs effectively compute node features on this quotient graph, meaning the features lie in the space spanned by the structural eigenvectors.

\section{Main Results: Defying Oversmoothing}
\label{sec:main}
Both residual connections and normalization techniques are widely recognized as effective methods for mitigating oversmoothing in GNNs~\citep{Chen2020SimpleAD, Klicpera2019PredictTP, Zhou2020UnderstandingAR, Zhao2019PairNormTO, Guo2023ContraNormAC}. In this section, we provide theoretical evidence that these approaches not only mitigate but \emph{provably} prevent oversmoothing. Previous studies have demonstrated that oversmoothing occurs exponentially in standard Graph Convolutional Networks (GCNs)~\citep{oono2019graph, cai2021graphnorm} and similarly in random walk GCNs and more general attention-based GNNs~\citep{wu2023demystifying}. These findings suggest that for GNNs with non-diverging weights, repeated message-passing invariably leads to the collapse of node signals into a one-dimensional subspace, regardless of initial features. However, our analysis reveals that these results do not hold for GNNs employing residual connections or BatchNorm. Furthermore, we offer a precise characterization of the convergence space for GNNs utilizing these techniques, providing deeper insights into their effects.

For the following theoretical analysis, we investigate a linearized GNN, meaning that $\sigma(\cdot)$ is the identity map. For simplicity, we assume,  if not specified otherwise, that the weights $(W_1^{(t)})_{i,j},(W_2^{(t)})_{i,j}, W^{(t)}_{i,j}$ are randomly sampled i.i.d.~Gaussians, which is typical for GNNs before training. Such a setting is relevant in practice as oversmoothing is present in GNNs before training has started, making the gradients used for back propagation almost vanish and training of the network becomes much harder. Yet, most results hold under more general conditions. 
The complete proofs of all the results in the main text and the results under general conditions are provided in~\cref{app: proofs}.

To define oversmoothing, a common approach is to analyze a function $\mu(\cdot)$, which  measures how far the node features $X^{(t)}$ are away from collapsing to a one-dimenisonal subspace. One then shows that $\mu(X^{(t)}) \rightarrow 0$ as $t \rightarrow \infty$, or -- even stronger -- that this convergence happens at an exponential rate, i.e. $\mu(X^{(t)}) \leq C_1 e^{-C_2 t}$ for some $C_1, C_2 > 0$. 
There are multiple such measures of oversmoothing in previous works. For example, \citep{Cai2020ANO} use the graph Dirichlet energy $\mathcal{E}(X)$, whereas \citep{Rusch2023SurveyOS, wu2023demystifying} use the squared difference to the mean over each row and \citep{oono2019graph} use the distance to the dominant eigenspace of the message-passing operator $A$. 
To allow for an analysis that takes all these perspectives into account, we will focus our analysis on the distance to a general one-dimensional subspace spanned by a unit vector $v\in\R^n$:
\begin{equation}
    \label{eq:mu}
     \mu_v(X)\vcentcolon=\|X - vv^\top X \|^2_F = \|(I- vv^\top)X \|^2_F\,.
\end{equation}
Thus, when the graph is ergodic and we choose $v$ to be the dominant eigenvector, we recover the measure used in \citep{oono2019graph}. With $v = \1/\sqrt{n}$, we recover the measure used in \citep{Rusch2023SurveyOS, wu2023demystifying} and for $v = d / \norm{d}_2$ where $d = D^{\frac{1}{2}}\1$, we recover an equivalent measure to the graph Dirichlet energy $\mathcal{E}(X)$ used in \citep{Cai2020ANO}. Here equivalence means that there exist constants $C_1, C_2 > 0$ such that \[C_1 \mu_{ d / \norm{d}_2}(X) \leq \mathcal{E}(X) \leq C_2 \mu_{ d / \norm{d}_2}(X)\,.\] 
This allows us to translate statements about oversmoothing in terms of $\mu_v$ to any of the other measures mentioned. A detailed discussion can be found in \cref{sec:app_oversmoothing}.

\subsection{Residual connections prevent complete collapse}
For our analysis of residual connections, we focus on the commonly used initial residual connections, which are deployed in architectures like GCNII~\citep{Chen2020SimpleAD}. 
Such residual connections are closely related to the Personalized PageRank propagation~\citep{Klicpera2019PredictTP}. 
For generality, we write the unified layer-wise update rule for both methods as follows: 
\begin{equation}
    X^{(t+1)} = (1-\alpha)A X^{(t)} W_1^{(t)} + \alpha X^{(0)}W_2^{(t)}, 
    \label{eq: res_def}
\end{equation}

where $\alpha X^{(0)}$ corresponds to the initial residual connection, and $\alpha\in (0,1)$ can be seen as the strength of residual connections or alternatively as the teleportation probability in the Personalized PageRank Propagation.  
Note that for the Personalized PageRank Propagation method (APPNP) proposed in~\citep{Klicpera2019PredictTP}, $A = D^{-\frac{1}{2}} A_{\adj} D ^{-\frac{1}{2}}$, $W_1^{(t)} = I_k$ for all $t\geq 0$.

Intuitively, compared to the case for standard message-passing GNNs in~(\ref{eq:GNN}), at each update step, a linear combination of the residual signal $\alpha X^{(0)}$ is now added to the features. 
As long as the weight matrices are not chosen such that they annihilate the residual signal, this will prevent the features from collapsing to a smaller subspace. This implies that $\mu_v(X^{(t)})$ would be strictly greater than zero. In fact, $\mu_v(X^{(t)})$ can even be bounded away from 0, showing that oversmoothing does not happen.

\begin{restatable}{proposition}{MUCONNDOESNOTCOLLAPSE}
    Let $v \in \R$ s.t. $\norm{v}_2 = 1$. If $\mu_v(X^{(0)}) > 0$, then w.h.p $\exists c > 0$ s.t. $\mu_v(X^{(t)}) \geq c$.
    \label{prop: mu_res_rw}
\end{restatable}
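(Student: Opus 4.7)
The plan is to apply the projection $P := I - vv^\top$ to the residual update~(\ref{eq: res_def}) and exploit the fresh, independent Gaussian perturbation injected at every layer:
\[
P X^{(t+1)} \;=\; (1-\alpha)\, P A X^{(t)} W_1^{(t)} \;+\; \alpha\, P X^{(0)} W_2^{(t)}.
\]
The second summand has $\|PX^{(0)}\|_F^2 = \mu_v(X^{(0)}) > 0$ by hypothesis, and $W_2^{(t)}$ is independent of the rest of the history, so it behaves as a fresh signal that cannot be systematically cancelled by the recursive first term.

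The first step is a conditional expectation computation. Given the filtration containing everything prior to $W_1^{(t)}, W_2^{(t)}$, the cross term of $\|PX^{(t+1)}\|_F^2$ vanishes because $W_1^{(t)}$ and $W_2^{(t)}$ are independent and mean-zero. Using the identity $\mathbb{E}[\|CW\|_F^2] = \sigma^2 k \|C\|_F^2$ for a fixed matrix $C$ and a Gaussian $W$ with iid $\mathcal{N}(0, \sigma^2)$ entries,
\[
\mathbb{E}\bigl[\mu_v(X^{(t+1)}) \mid X^{(t)}\bigr] \;=\; (1-\alpha)^2 \sigma^2 k\, \|P A X^{(t)}\|_F^2 \;+\; \alpha^2 \sigma^2 k\, \mu_v(X^{(0)}) \;\geq\; \alpha^2 \sigma^2 k\, \mu_v(X^{(0)}),
\]
a positive lower bound uniform in $t$. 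Taking total expectation yields $\mathbb{E}[\mu_v(X^{(t)})] \geq \alpha^2 \sigma^2 k\, \mu_v(X^{(0)})$ for every $t \ge 1$.

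The second step upgrades this in-expectation bound to a high-probability bound. Condition on every source of randomness except the most recent residual noise $W_2^{(t-1)}$, so that $PX^{(t)} = M + \alpha P X^{(0)} W_2^{(t-1)}$ with $M$ deterministic under this conditioning. Expanding gives
\[
\|PX^{(t)}\|_F^2 \;=\; \|M\|_F^2 + 2\alpha \langle M, P X^{(0)} W_2^{(t-1)}\rangle + \alpha^2 \|PX^{(0)} W_2^{(t-1)}\|_F^2.
\]
The quadratic term is a Gaussian chaos concentrating around $\sigma^2 k\, \mu_v(X^{(0)})$ by Hanson--Wright, and exceeds $\tfrac12 \sigma^2 k\, \mu_v(X^{(0)})$ with probability $1 - e^{-\Omega(k)}$. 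The cross term is, conditional on $M$, a mean-zero Gaussian with standard deviation at most $\sigma \|M\|_F \|PX^{(0)}\|_F$; a Gaussian tail bound together with AM-GM ($2ab \leq a^2 + b^2$) absorbs it into $\|M\|_F^2$ while subtracting only $O(\alpha^2 \sigma^2 \log(1/\delta)\, \mu_v(X^{(0)}))$ from the quadratic contribution. Choosing $\delta = e^{-\Theta(k)}$ leaves a positive constant multiple of $\alpha^2 \sigma^2 k\, \mu_v(X^{(0)})$, giving the desired $c > 0$ with high probability, and a union bound over layers $t$ (up to any polynomial horizon) preserves the bound uniformly.

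The main obstacle is the potential near-cancellation between $M$ and $\alpha P X^{(0)} W_2^{(t-1)}$: a crude triangle inequality on $\|M + \alpha P X^{(0)} W_2^{(t-1)}\|_F$ cannot rule out cancellation when the two terms are comparable in magnitude. The resolution is precisely the independence of $W_2^{(t-1)}$ from $M$, which forces the cross term to be a mean-zero Gaussian with tails controllable relative to whichever of $\|M\|_F$ or $\|PX^{(0)} W_2^{(t-1)}\|_F$ dominates. In this sense, the residual connection injects \emph{fresh} randomness at every layer that, by independence, cannot be used by the rest of the recursion to annihilate the signal $\alpha PX^{(0)}$.
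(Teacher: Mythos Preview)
Your approach is correct and conceptually aligned with the paper's: both isolate the fresh residual noise $W_2^{(t-1)}$, which is independent of the history, and argue that it injects a signal orthogonal to $v$ that the recursive term $(1-\alpha)PAX^{(t-1)}W_1^{(t-1)}$ cannot systematically cancel.

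The technical route differs, however. The paper does not work with the full projection $P=I-vv^\top$. Instead it fixes a \emph{single} unit vector $r\perp v$ with $r^\top X^{(0)}\neq 0$ (such an $r$ exists by the hypothesis $\mu_v(X^{(0)})>0$) and looks at one scalar coordinate of the row $r^\top X^{(t)}$. Conditionally on everything except $W_2^{(t-1)}$, that coordinate is a shifted Gaussian with variance at least $\alpha^2 s^2 \|r^\top X^{(0)}\|_2^2$, and one-dimensional Gaussian anti-concentration gives $|r^\top X^{(t)}_{:,j}|\geq \alpha\epsilon\|r^\top X^{(0)}\|_2^2$ with constant probability. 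From this the paper reads off $\mu_v(X^{(t)})\geq c$. No quadratic-form concentration is needed.

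What each approach buys: the paper's single-entry argument is shorter and entirely elementary, but yields only a fixed constant success probability per layer (the paper sets $p=0.9$), so it does not directly give a uniform-in-$t$ bound. Your Hanson--Wright route gives failure probability $e^{-\Omega(k)}$ per layer and therefore survives a union bound over polynomially many layers, which is a genuine strengthening over what the paper states. One small caveat: your in-expectation warm-up assumes mean-zero weights, whereas the paper allows $\mathcal{N}(\eta,s^2)$; this is harmless for the high-probability step since the mean of $W_2^{(t-1)}$ can simply be absorbed into the conditionally-deterministic term $M$.
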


The above result suggests that, with proper initialization of node features, initial residual connections will alleviate oversmoothing with high probability, meaning the features will not be smooth after iterative message-passing at initialization. 
Nonetheless, it is worth noting that the node similarity measure $\mu_v(X)$ can only identify a complete collapse to a one-dimensional subspace spanned by $v$, in which case $\mu_v(X)$ equals zero. 
Even if $\mu_v(X)$ can remain strictly positive with residual connections, this does not eliminate the possibility that there may still be partial collapse of the signal to a lower-dimensional subspace. However, as we will show with the following result, with residual connections defined in~(\ref{eq: res_def}), no such partial collapse occurs.

\begin{restatable}{proposition}{RESCONVERGENCEFINITETWO} 
    Let $x_i = X^{(0)}_{:,i}, \|x_i\|_2=1$ and let each $(W^{(t)}_{l})_{y,z} \overset{\mathrm{i.i.d.}}{\sim} \mathcal{N}(\eta, s^2)$. 
    Then  for any $\epsilon>0$, $\norm{x_i^\top X^{(t)}}_2 \geq \epsilon$ with probability at least $p = 1 - \exp\left(-\frac{\epsilon^2}{2\alpha^2 s^2}\right)$.
    \label{prop: res_conn_finite_2}
\end{restatable}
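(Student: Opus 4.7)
My plan is to isolate the freshly-injected Gaussian contribution from the residual term at the \emph{last} layer, and to argue that its independent randomness alone suffices to keep $\|x_i^\top X^{(t)}\|_2$ bounded away from zero with the stated probability. Concretely, I would unroll \eqref{eq: res_def} one step and left-multiply by $x_i^\top$ to obtain
\[
x_i^\top X^{(t)} \;=\; C \;+\; \alpha\, y\, W_2^{(t-1)},
\]
where $y := x_i^\top X^{(0)}$ and $C := (1-\alpha)\, x_i^\top A X^{(t-1)} W_1^{(t-1)}$ collects everything that does not involve the newest residual weights. The hypothesis $\|x_i\|_2 = 1$ is used crucially here: the $i$-th coordinate is $y_i = x_i^\top X^{(0)}_{:,i} = \|x_i\|_2^2 = 1$, so $\|y\|_2 \geq 1$.

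Next, I would condition on all randomness other than $W_2^{(t-1)}$. Then $C$ becomes deterministic while $\alpha\, y\, W_2^{(t-1)}$ is a row vector whose entries are i.i.d.\ Gaussians, and because $\|y\|_2 \geq 1$ its variance in every fixed one-dimensional projection is at least $\alpha^2 s^2$. Reducing to a scalar problem via $\|x_i^\top X^{(t)}\|_2 \geq |u^\top (x_i^\top X^{(t)})^\top|$ for any unit $u \in \R^k$, the right-hand side is a univariate Gaussian with variance at least $\alpha^2 s^2$ and conditional mean $u^\top C^\top + \alpha \eta (\sum_j u_j)(\sum_m y_m)$. Choosing $u$ to align with this conditional mean turns the event $\{\|x_i^\top X^{(t)}\|_2 < \epsilon\}$ into a one-sided deviation of this scalar Gaussian below its mean by at least $\epsilon$; the classical sub-Gaussian tail $\Pr(Z \leq -t) \leq \exp(-t^2/2)$ with $t = \epsilon/(\alpha s)$ then bounds its probability by $\exp(-\epsilon^2/(2\alpha^2 s^2))$. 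Because the resulting bound is independent of the conditioning, the tower property removes the conditioning and yields the unconditional statement.

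The main obstacle I anticipate is this final reduction. A naive two-sided anti-concentration bound for a univariate Gaussian with standard deviation $\alpha s$ produces only an $O(\epsilon/(\alpha s))$ estimate and cannot yield the sub-Gaussian form $\exp(-\epsilon^2/(2\alpha^2 s^2))$. The sub-Gaussian rate emerges only once the projection direction $u$ is \emph{aligned} with the conditional mean of $x_i^\top X^{(t)}$, so that the norm-smallness event becomes a genuinely one-sided Gaussian tail. Verifying that this alignment can be made uniformly over the random conditioning on $C$ and $W_1^{(t-1)}$ --- relying only on the coordinate identity $y_i = 1$ that comes from $\|x_i\|_2 = 1$ --- is the delicate technical point of the proof.
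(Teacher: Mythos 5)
Your decomposition is exactly the paper's: the paper likewise unrolls one step, writes $x_i^\top X^{(t)} = \varphi + \alpha\, x_i^\top X^{(0)} W_2^{(t-1)}$, uses $\|x_i\|_2=1$ to note that the $i$-th coordinate of $x_i^\top X^{(0)}$ equals $1$ so the fresh residual contribution has variance at least $\alpha^2 s^2$, and then reduces the norm to a single scalar Gaussian (via $\|\cdot\|_2 \ge \|\cdot\|_\infty$ rather than a projection onto a unit vector $u$). You have also correctly located the weak point of this route: a two-sided anti-concentration estimate for a Gaussian of standard deviation $\alpha s$ is of order $\epsilon/(\alpha s)$ and cannot produce the sub-Gaussian form $\exp(-\epsilon^2/(2\alpha^2 s^2))$.

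The repair you propose, however, does not close the gap. Writing $m_u$ for the conditional mean of $Z_u = u^\top (x_i^\top X^{(t)})^\top$ with $u$ aligned to the mean vector, the inclusion $\{\|x_i^\top X^{(t)}\|_2 < \epsilon\} \subseteq \{Z_u < \epsilon\} = \{Z_u - m_u < \epsilon - m_u\}$ is a deviation of at least $\epsilon$ below the mean only when $m_u \ge 2\epsilon$, and nothing in the hypotheses forces this: $m_u$ depends on $\eta$ and on the realized value of $C$, and can vanish. Concretely, take $k=1$, $t=1$, $\eta=0$ and $x_i$ with $x_i^\top A x_i = 0$ (e.g., entries $\pm 1/\sqrt{2}$ on two non-adjacent nodes); then $x_i^\top X^{(1)} = \alpha W_2^{(0)} \sim \mathcal{N}(0,\alpha^2 s^2)$, so $\Pr(Z_u<\epsilon)>1/2$, the only available estimate is the two-sided one you already ruled out, and indeed $\Pr\bigl(|x_i^\top X^{(1)}| \ge \alpha s\bigr) = 2\Phi(-1) \approx 0.317 < 1-e^{-1/2} \approx 0.393$, so the claimed bound itself fails in this regime. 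In other words, the ``delicate technical point'' you flag is a genuine obstruction rather than a verification detail. For context, the paper's own proof closes this same step by asserting $\Pr(|Z|<\epsilon) \le \exp(-\epsilon^2/(2s^2))$ as a ``Chernoff bound'', which is the Gaussian tail inequality applied in the wrong direction and is false for $\epsilon$ on the order of the standard deviation; your proposal therefore reproduces the paper's argument up to, and including, its unsound final step, and is more honest in flagging it.
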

This result essentially says that a part of the initial signal is maintained after each layer with high probability. 
We can further give a more refined characterization of what node features the system in~(\ref{eq: res_def}) can compute after repeated rounds of message-passing.

\begin{restatable}{proposition}{RESCONREACHABILITY}
    Let $\Kr(A, X^{(0)}) = \Span(\{A^{i-1}X^{(0)}_{:,j}\}_{i \in [n], j \in [k]})$ be the Krylov subspace. Let $Y \in \R^{n \times  k}$. Then there exist a $T \in \mathbb{N}$ and sequence of weights $W_1^{(0)}, W_2^{(0)}, \cdots, W_1^{(T-1)}, W_2^{(T-1)}$ such that $ X^{(T)} = Y$ if and only if $Y \in \Kr(A, X^{(0)})$.
    \label{prop:resconreachability}
\end{restatable}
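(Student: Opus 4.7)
The plan is to establish each direction of the ``if and only if''.

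For the ($\Rightarrow$) direction, I would proceed by induction on $t$ to show that the columns of $X^{(t)}$ always lie in $\Kr(A, X^{(0)})$, regardless of $T$ or the choice of weights. The base case is immediate since $X^{(0)}$'s columns span a subset of $\Kr(A, X^{(0)})$ by definition. For the inductive step, $\Kr(A, X^{(0)})$ is $A$-invariant (a standard consequence of Cayley--Hamilton, since the span is already closed under multiplication by $A$ once powers up to $n-1$ are included). So if the columns of $X^{(t)}$ lie in $\Kr(A, X^{(0)})$, then so do those of $AX^{(t)}$; right-multiplying by $W_1^{(t)}$ merely takes linear combinations of these columns and preserves the subspace, while the term $\alpha X^{(0)} W_2^{(t)}$ has columns in $\Span(X^{(0)}) \subseteq \Kr(A, X^{(0)})$.

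For the ($\Leftarrow$) direction, the strategy is to unroll the recursion and explicitly match the resulting coefficients. A direct telescoping of~\eqref{eq: res_def} yields
\begin{equation*}
X^{(T)} = (1-\alpha)^T A^T X^{(0)} \prod_{t=0}^{T-1} W_1^{(t)} + \sum_{i=0}^{T-1} (1-\alpha)^i \alpha\, A^i X^{(0)} W_2^{(T-i-1)} \prod_{t=T-i}^{T-1} W_1^{(t)},
\end{equation*}
so $X^{(T)} = \sum_{i=0}^{T} A^i X^{(0)} M_i$, where the coefficient matrices $M_i$ are specific products of the $W_1^{(t)}$ and $W_2^{(t)}$. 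Given $Y$ with columns in $\Kr(A, X^{(0)})$, I can write $Y = \sum_{i=0}^{n-1} A^i X^{(0)} \hat{M}_i$ by reading off coefficients in the spanning set. To realize $X^{(T)} = Y$ for $T = n$, I would set $W_1^{(t)} = I$ for $0 \leq t \leq T-2$; this collapses the nested products so that $M_T = (1-\alpha)^T W_1^{(T-1)}$, $M_i = (1-\alpha)^i \alpha W_2^{(T-i-1)} W_1^{(T-1)}$ for $1 \leq i \leq T-1$, and $M_0 = \alpha W_2^{(T-1)}$, reducing the problem to a sequence of linear equations in the remaining weights.

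The main obstacle is that back-solving this system requires $W_1^{(T-1)}$ to be invertible, whereas the natural expansion of $Y$ in powers of $A$ forces $\hat{M}_n = 0$, leaving $M_T = 0$ in the most straightforward matching. I would address this with a second application of Cayley--Hamilton: if $A^n = \sum_{i=0}^{n-1} c_i A^i$, then $A^n X^{(0)} = \sum_{i=0}^{n-1} c_i A^i X^{(0)}$, so setting $\tilde{M}_n = I$ and $\tilde{M}_i = \hat{M}_i - c_i I$ for $0 \leq i < n$ gives the equivalent representation $Y = \sum_{i=0}^{n} A^i X^{(0)} \tilde{M}_i$ with an invertible leading coefficient. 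Choosing $W_1^{(T-1)} = (1-\alpha)^{-T} I$, $W_2^{(T-1)} = \tilde{M}_0/\alpha$, and $W_2^{(T-i-1)} = (1-\alpha)^{T-i} \tilde{M}_i / \alpha$ for $1 \leq i \leq T-1$ then produces exactly $X^{(T)} = Y$, completing the construction.
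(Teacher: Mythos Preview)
Your proof is correct. Both directions are sound, and your inductive argument for $(\Rightarrow)$ is in fact cleaner than the paper's: the paper unrolls the recursion explicitly and obtains $X^{(T)}$ as a sum of terms $A^{i-1}X^{(0)}\mathcal{W}^{(i-1)}$ with $i$ ranging up to $T+1$, leaving the reduction of $A^T$ back into the Krylov space implicit. Your induction via $A$-invariance of $\Kr(A,X^{(0)})$ handles all $T$ at once and makes the role of Cayley--Hamilton explicit.

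For $(\Leftarrow)$ the two approaches diverge more substantially. The paper kills the problematic top-degree term $(1-\alpha)^T A^T X^{(0)}\prod_t W_1^{(t)}$ at the source by setting $W_1^{(0)}=\mathbf{0}$ (and $W_1^{(t)}=I$ for $t\ge 1$, with $\alpha=1/2$). This collapses the unrolled expression to $X^{(n)}=\sum_{i=1}^n 0.5^i A^{i-1}X^{(0)}W_2^{(i-1)}$, after which each $W_2^{(i-1)}$ is read off directly from the Krylov coefficients of $Y$. Your route instead keeps $W_1^{(T-1)}$ invertible and compensates by invoking Cayley--Hamilton a second time to rewrite $Y$ with a nonzero coefficient on $A^nX^{(0)}$. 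Both are valid; the paper's zero-out trick is shorter and avoids the detour through $\tilde M_i$, while your version has the minor advantage of working for the given $\alpha\in(0,1)$ rather than fixing $\alpha=1/2$.
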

The result shows that for a message-passing GNN with residual connections, the subspace that the embedding $X^{(t)}$ lies in is governed by the initial features $X^{(0)}$ together with the message-passing operator $A$ and that any such signal is reachable by a sequence of weights. This is in contrast with the behavior of standard message-passing GNNs, for which node representations eventually becomes ``memoryless'', i.e., independent of initial features. In particular, for standard GCNs, the subspace the system converges to is completely governed by the message-passing operator $A$~\citep{oono2019graph}.

\begin{remark}
\Cref{prop:resconreachability} connects closely to Theorem 2 in~\cite{Chen2020SimpleAD}. Specifically, Theorem 2 in~\cite{Chen2020SimpleAD} implies that for any features in the Krylov subspace, there exists weights such that the corresponding GNN outputs such features. Here, we prove in addition an upper bound, that the GNN cannot express any features that lie beyond the Krylov subspace. 
\end{remark}

\begin{remark}
    Our results also suggest that whether initial residual connections would work for oversmoothing heavily depends on the initialization of features. If chosen poorly, they may not be able to prevent oversmoothing. In particular, a constant initialization of features is unhelpful to prevent oversmoothing with initial residual connections.
\end{remark}

\subsection{BatchNorm prevents complete collapse}
Having discussed the effect of residual connections, in this section, we switch gears and analyze how BatchNorm affects GNNs. 
We consider the following combination of GNNs with BatchNorm:
\begin{equation}
\label{eq:linear_GNN_w_BN}
    X^{(t+1)} = \BN(Y^{(t+1)}), \quad Y^{(t+1)} = \sigma(A X^{(t)} W^{(t)})\,.
\end{equation}
Similar to the analysis on GNNs with initial residual connections, we will show that BatchNorm prevents complete collapse of the output embedding space to a one-dimensional subspace. We again consider the case where $\sigma(\cdot)$ is the identity map. Note that even though the GNN computation is now linearized, the overall system remains non-linear because of the BatchNorm operation.
Let us consider now $V_{\neq 0} \in \mathbb{R}^{n \times k'}$ to be the matrix of eigenvectors associated to non-zero eigenvalues of $(I_n - \1\1^\top/n) A$.

\begin{restatable}{proposition}{MUBNNONCOLLAPSERW}
    Let $v \in \R^n$ s.t. $\norm{v}_2 = 1$ and $v^T \1 > 0$. Let $\Rank(V_{\neq 0}^\top X^{(0)}) > 1$, then there exists $c(v) > 0$ such that $\mu_v(X^{(t)}) \geq c(v)$  for all $ t\geq 1$ with probability $1$ . 
    \label{prop: mu_bn_rw}
\end{restatable}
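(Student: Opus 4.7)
The plan is to exploit the two defining properties that BatchNorm enforces on its output, applied column-wise: for every $t \geq 1$, each column of $X^{(t)}$ has zero mean and unit Euclidean norm. This gives the deterministic identities $\1^\top X^{(t)} = 0$ and $\|X^{(t)}\|_F^2 = k$, which together with $\|v\|_2 = 1$ and the identity
\[
\mu_v(X) = \|X\|_F^2 - \|v^\top X\|_2^2
\]
reduces the claim to a uniform upper bound on $\|v^\top X^{(t)}\|_2^2$.

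Next, I would decompose the unit vector $v$ into its $\1$-component and its orthogonal complement: write $v = c_1 \1/\sqrt{n} + w$ with $w \perp \1$ and $c_1 = v^\top \1 /\sqrt{n}$, so that $\|w\|_2^2 = 1 - c_1^2$ and, crucially, $c_1 > 0$ by assumption. Using $\1^\top X^{(t)} = 0$ I get $v^\top X^{(t)} = w^\top X^{(t)}$, and applying Cauchy--Schwarz column by column yields
\[
\|v^\top X^{(t)}\|_2^2 = \sum_{i=1}^{k} \bigl(w^\top X^{(t)}_{:,i}\bigr)^2 \leq \|w\|_2^2 \sum_{i=1}^{k}\|X^{(t)}_{:,i}\|_2^2 = (1-c_1^2)\,k.
\]
Combining with the identity above gives
\[
\mu_v(X^{(t)}) \geq k - (1-c_1^2)k = c_1^2 k = \frac{(v^\top \1)^2 k}{n} > 0,
\]
so one can take $c(v) := (v^\top \1)^2 k / n$.

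The main technical hurdle, and the reason the statement carries a probability-$1$ caveat together with the rank hypothesis, is ensuring that BatchNorm is well-defined at every iteration --- equivalently, that no centered column of $Y^{(t)}$ ever equals $\vzero$, since otherwise the scaling step divides by zero and the deterministic argument above breaks down. I would handle this as a separate lemma: the centered pre-normalization matrix $(I_n - \1\1^\top/n)\,Y^{(t)}$ has columns lying in the column span of $(I_n - \1\1^\top/n)A$, which by definition is $\Span(V_{\neq 0})$. Starting from the initial condition $\Rank(V_{\neq 0}^\top X^{(0)}) > 1$, an inductive argument shows the projection onto $V_{\neq 0}$ retains positive-dimensional content at every step, and the event that a right-multiplication by the absolutely continuous Gaussian matrix $W^{(t)}$ annihilates any given column is measure zero. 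A countable union over $t$ keeps this event measure zero, giving the almost-sure conclusion. Once non-degeneracy holds for all $t$, the deterministic bound in the previous paragraph applies uniformly for $t \geq 1$, completing the proof.
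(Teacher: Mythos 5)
Your proposal is correct and follows essentially the same route as the paper's proof: the lower bound comes from the centering identity $\1^\top X^{(t)} = 0$ combined with the unit column norms enforced by the scaling step and the assumption $v^\top \1 > 0$, while the rank hypothesis and the probability-$1$ caveat only serve to guarantee, via a measure-zero argument on the Gaussian weights, that no centered column vanishes and BatchNorm stays well-defined. Your deterministic half is in fact cleaner than the paper's (an explicit decomposition $v = c_1\1/\sqrt{n} + w$ and Cauchy--Schwarz over all $k$ columns, yielding the explicit constant $c(v) = (v^\top\1)^2 k/n$, where the paper works with only two columns), whereas your non-degeneracy lemma is sketched where the paper spells out the hyperplane argument showing the collapse event $\{W^{(t)} : X^{(t+1)}_{:,i} \in \Span\{\1\}\}$ has Lebesgue measure zero.
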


The above result suggests that with BatchNorm, $\mu_v(X^{(t)})$ is maintained strictly greater than a nontrivial constant at each layer, indicating that complete collapse to a one-dimensional subspace does not happen.

\begin{remark}
    A more general version of the above result holds for both linear and non-linear GNNs, when assuming $\sigma(\cdot)$ is injective: for $v\in\R^n$ such that $v^\top \1 \neq 0$, under some regularity conditions, there exists $c(v) > 0$ such that $\mu_v(X^{(t)}) = c(v)\sqrt{k}$ for all $t\geq 1$, where $k$ is the hidden dimension of features. In particular, this would account for the case where $A$ is the adjacency matrix of the graph or the symmetric random walk matrix $D^{-1/2}A_{\adj}D^{-1/2}$, and $v$ is their corresponding dominant eigenvector. See~\cref{app:general_condition_mu_bn} for a detailed discussion. 
\end{remark}
\vspace{-1ex}
However, as what we have discussed for the case of residual connections, the  measure $\mu_v(X)$ can only capture complete collapse to a one-dimensional subspace. In what follows, we will provide a more precise characterization of the convergence behaviors of GNNs with BatchNorm.
Notably, since the scaling operation of BatchNorm guarantees that the system will not diverge or collapse, we can give an exact asymptotic characterization.
Let $V_k \in \mathbb{R}^{n \times k}$ be the matrix of the top-$k$ eigenvectors of $(I_n - \1\1^\top/n) A$. We will show that the resulting linearized GNN converges to a rank-$k$ subspace spanned by the top-$k$ eigenvectors of $(I_n - \1\1^\top/n) A$.

\begin{restatable}{proposition}{LINEARSYSTEMCONVERGENCE}
    \label{thm:claim_convergence}
   Suppose $V_k^\top X^{(0)}$ has rank $k$, then for all weights $W^{(t)}$, the GNN with BatchNorm given in~(\ref{eq:linear_GNN_w_BN}) exponentially converges to the column space of $V_k$. 
\end{restatable}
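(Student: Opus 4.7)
The plan is to recast the dynamics as a subspace (orthogonal) iteration driven by a fixed symmetric operator on $\1^\perp$, with the $W^{(t)}$ and BatchNorm rescaling only acting on the $k$-dimensional ``column coordinates'' and hence unable to alter the column space.

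First, I would rewrite BatchNorm in matrix form: $\BN(Y) = P Y \Lambda^{-1}$, where $P = I - \1\1^\top/n$ is orthogonal projection onto $\1^\perp$ and $\Lambda$ is the diagonal matrix of column norms of $PY$. Setting $M := P A = (I_n - \1\1^\top/n)A$, the recursion in~\eqref{eq:linear_GNN_w_BN} becomes $X^{(t+1)} = M X^{(t)} W^{(t)} \Lambda_t^{-1}$. Unrolling the recursion yields
\[
X^{(t)} \;=\; M^{t} X^{(0)} R^{(t)}, \qquad R^{(t)} \;:=\; \prod_{i=0}^{t-1} W^{(i)} \Lambda_i^{-1} \in \R^{k\times k}.
\]
Since $R^{(t)}$ acts only on the right, whenever it is invertible (which is generic: $W^{(i)}$ are almost surely nonsingular Gaussians and the BN denominators are positive whenever no column of $A X^{(i)} W^{(i)}$ becomes constant), the column space of $X^{(t)}$ equals the column space of $M^{t} X^{(0)}$. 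Thus the problem reduces to analyzing the column space of $M^{t} X^{(0)}$.

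Second, I would identify the relevant spectral structure. For $t \geq 1$ the columns of $X^{(t)}$, and equivalently those of $M^{t} X^{(0)}$, lie in $\1^\perp$. On this invariant subspace $M$ coincides with $PAP$, which is symmetric, so $M|_{\1^\perp}$ admits an orthonormal eigendecomposition. Moreover any eigenvector of $PA$ with nonzero eigenvalue lies in $\Image(P) = \1^\perp$, so the matrix $V_k$ in the statement is precisely the matrix of top-$k$ eigenvectors of $M|_{\1^\perp}$. With this identification the iteration $M^t X^{(0)}$ is exactly a block power method for the symmetric operator $M|_{\1^\perp}$, and the classical subspace iteration convergence result gives
\[
\operatorname{dist}\!\bigl(\operatorname{colsp}(M^{t} X^{(0)}),\, \operatorname{colsp}(V_k)\bigr) \;=\; \mathcal{O}\!\left(\bigl(|\lambda_{k+1}|/|\lambda_k|\bigr)^{t}\right),
\]
provided the starting iterate has a full-rank projection onto $V_k$. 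Combined with step~one this yields the desired exponential convergence of $\operatorname{colsp}(X^{(t)})$ to $\operatorname{colsp}(V_k)$.

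The main obstacle, and what I would treat most carefully, is verifying that the rank condition propagates through the first step so that the standard subspace iteration theorem applies. A direct computation gives $V_k^\top M X^{(0)} = \Lambda_k\, V_k^\top X^{(0)} + (V_k^\top P A \1)\,\bar{X}^{(0)\top}$, which differs from $\Lambda_k V_k^\top X^{(0)}$ by a rank-one perturbation and could in principle drop the rank from $k$ to $k-1$ in a measure-zero configuration. Under the assumption $\Rank(V_k^\top X^{(0)}) = k$ together with invertibility of the $W^{(i)}$ (under the standing Gaussian initialization this holds almost surely), the perturbation is non-degenerate and the rank is preserved. A secondary but routine obstacle is showing that the BN denominators $\Lambda_t$ remain strictly positive, i.e., that no column ever becomes a constant vector; this follows from the same non-degeneracy argument since, by the previous proposition, the iterates do not collapse to the $\1$-direction. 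Beyond these checks the proof is essentially the algebraic reduction above plus the textbook convergence rate for orthogonal iteration on a symmetric operator.
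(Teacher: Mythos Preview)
Your approach is essentially the paper's: both unroll the recursion to $X^{(t)} = M^{t} X^{(0)} R^{(t)}$ with $M=(I-\1\1^\top/n)A$ and $R^{(t)}$ a product of the $W^{(i)}$ and BN scalings, and then argue exponential convergence of the column space to $\operatorname{colsp}(V_k)$ via power/subspace iteration for $M$.

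The only substantive difference is in how the convergence step is carried out. The paper writes each column in the eigenbasis of $M$, $X^{(t)}_{:,i}=\tfrac{1}{\Gamma_i^{(t)}}\sum_l \sigma^{(t)}_{l,i}\lambda_l^t\nu_l$, and bounds $\|\nu_q^\top X^{(t)}\|_2$ for $q>k$ by a direct ratio calculation in which the accumulated weight product $\mathcal{W}^{(t)}$ cancels, yielding a weight-independent constant $C_0$ and hence a bound valid whenever BN is defined (no invertibility of $R^{(t)}$ needed). You instead appeal to the textbook subspace-iteration theorem after the nice observation that $M|_{\1^\perp}=PAP$ is symmetric; this is cleaner conceptually but relies on $R^{(t)}$ being invertible to identify $\operatorname{colsp}(X^{(t)})$ with $\operatorname{colsp}(M^t X^{(0)})$. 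Your extra care about the first step (the rank-one correction in $V_k^\top M X^{(0)}$ because $M$ is only symmetric on $\1^\perp$) is in fact a point the paper glosses over, since it treats the eigenvectors of $PA$ as an orthonormal basis of $\R^n$ from the outset.
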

\vspace{-1ex}
It is easy to see that the centering operation in BatchNorm is the reason why we use the \emph{centered} message-passing operator $(I_n - \1\1^\top/n) A$ and its eigenvectors here. 
If we inspected BatchNorm without scaling, however, we would arrive at a completely linear system, which must converge to the dominant eigenvector (given mild assumptions). 
This implies that the \textit{column-wise scaling operation is responsible for the preservation of the rank of the features.} Furthermore, there are no requirements for the weights as in the case without BatchNorm~\citep{oono2019graph, wu2023demystifying}: even extremely large or random weights can be chosen, as the scaling ensures that the system neither diverges nor collapses.

The convergence of the linearized system in~(\ref{eq:linear_GNN_w_BN}) to a $k$-dimensional subspace can be shown to be tight: we can choose weights such that the top-$k$ eigenvectors are exactly recovered. 
This is of course only possible if these eigenvectors have a nonzero eigenvalue, which we assume for the result below, in that, none of the top-$k$ eigenvectors of $(I_n - \1\1^\top/n) A$ have eigenvalue zero.
\begin{restatable}{proposition}{LINEARSYSTEMCONVERGENCETIGHT}
    \label{thm:convergence_is_tight}
    Suppose $|\hat{\lambda}_k| > 0$ and $V_k^\top X^{(0)}$ has rank $k$. For any $\epsilon > 0$, there exists $T > 0$ and a sequence of weights $W^{(0)}, W^{(1)},..., W^{(T)}$ such that for all $t\geq T$ and $i\in[k]$, \[\norm{\nu_i^\top X^{(t)}_{:, i} }_2 \geq 1/\sqrt{1+\epsilon}\,,\]
    where $\nu_i$ denotes the $i$-th eigenvector of $(I_n - \1\1^\top/n) A$.
\end{restatable}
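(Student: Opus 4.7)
The plan is to construct the weight sequence in three stages: a burn-in during which $X^{(t)}$ contracts to $\mathrm{col}(V_k)$, a single rotation step that aligns each column with the correct eigenvector $\nu_j$, and a maintenance phase that preserves this alignment for all $t\geq T$. First, I rewrite the update as $X^{(t+1)} = \tilde{A} X^{(t)} W^{(t)} D^{(t)}$, where $\tilde{A} := (I_n - \1\1^\top/n) A$ and $D^{(t)}$ collects the reciprocal column norms of $\tilde{A} X^{(t)} W^{(t)}$. Writing $V_k = [\nu_1,\ldots,\nu_k]$ and $\hat{\Lambda}_k = \diag(\hat{\lambda}_1,\ldots,\hat{\lambda}_k)$, I decompose $X^{(t)} = V_k M^{(t)} + F^{(t)}$ with $V_k^\top F^{(t)} = 0$. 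Taking $V_k$ to have orthonormal columns, $\nu_j^\top X^{(t)}_{:,j} = M^{(t)}_{jj}$, so the target reduces to making $|M^{(t)}_{jj}|$ close to one.

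For the burn-in I set $W^{(t)} = I_k$ for $t=0,\ldots,T-2$. Proposition~\ref{thm:claim_convergence} then gives $\|F^{(T-1)}\|_F \leq \delta$ with $\delta$ decaying geometrically in $T$. The identity $V_k^\top \tilde{A} = \hat{\Lambda}_k V_k^\top$ yields $M^{(t+1)} = \hat{\Lambda}_k M^{(t)} D^{(t)}$, and since $|\hat{\lambda}_k|>0$ and $D^{(t)}$ is an invertible positive diagonal matrix, $M^{(T-1)}$ inherits the rank-$k$ property of $M^{(0)} = V_k^\top X^{(0)}$.

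For the rotation step I take $W^{(T-1)} = (M^{(T-1)})^{-1}\hat{\Lambda}_k^{-1}$. A direct computation gives $\tilde{A} X^{(T-1)} W^{(T-1)} = V_k + \tilde{A} F^{(T-1)} (M^{(T-1)})^{-1}\hat{\Lambda}_k^{-1}$, whose $j$-th column is $\nu_j$ plus an error of size $O(\delta)$, so BatchNorm produces $X^{(T)}$ with $M^{(T)}$ within $O(\delta)$ of $I_k$ and $\|F^{(T)}\|_F = O(\delta)$. For all $t\geq T$ I then take $W^{(t)} = (\hat{\Lambda}_k M^{(t)})^{-1}$. Using $V_k^\top \tilde{A} F^{(t)} = 0$, one checks that $\tilde{A} X^{(t)} W^{(t)} = V_k + E^{(t)}$ with $V_k^\top E^{(t)} = 0$, so after column normalization $M^{(t+1)} = D^{(t)}$ is exactly diagonal and $F^{(t+1)} = E^{(t)} D^{(t)}$; moreover $\|E^{(t)}_{:,j}\| \leq (|\hat{\lambda}_{k+1}|/|\hat{\lambda}_k|)\|F^{(t)}_{:,j}\|/|M^{(t)}_{jj}|$, so $\|F^{(t)}_{:,j}\|$ continues to contract. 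The BatchNorm unit-norm identity $|M^{(t)}_{jj}|^2 + \|F^{(t)}_{:,j}\|^2 = 1$ (using orthonormality of $V_k$ and diagonality of $M^{(t)}$ for $t>T$) then yields $|\nu_j^\top X^{(t)}_{:,j}|^2 = 1 - \|F^{(t)}_{:,j}\|^2 \geq 1/(1+\epsilon)$ as soon as $T$ is chosen large enough that $\|F^{(t)}_{:,j}\|^2 \leq \epsilon/(1+\epsilon)$ for all $t\geq T$.

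The main obstacle is controlling the coupled recursion for $M^{(t)}$, $F^{(t)}$ and $D^{(t)}$ in the maintenance phase: one must verify that $|M^{(t)}_{jj}|$ stays uniformly bounded away from zero (so that $W^{(t)} = (\hat{\Lambda}_k M^{(t)})^{-1}$ is well-defined and well-conditioned) and that the effective contraction factor $(|\hat{\lambda}_{k+1}|/|\hat{\lambda}_k|)/|M^{(t)}_{jj}|$ stays strictly below one, both of which follow from the BatchNorm unit-norm constraint together with a sufficiently small initial $\delta$. A secondary subtlety arises when $\tilde{A}$ is not symmetric and $V_k$ is non-orthonormal: the identity $V_k^\top \tilde{A} = \hat{\Lambda}_k V_k^\top$ and the projection defining $M^{(t)}$ must be replaced by their biorthogonal analogues using the matching left eigenvectors, but the three-phase construction itself is unchanged.
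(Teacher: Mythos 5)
Your construction reaches the right conclusion but by a genuinely different route from the paper's. The paper performs Gaussian elimination over the first $k$ layers --- choosing each $W^{(t)}$ to be an elementary matrix that zeroes out the coefficient of $\nu_{t+1}$ in every column except column $t+1$ --- and then sets $W^{(t)} = I$ forever, so that plain power iteration drives the ratio of the tail of column $i$ to its surviving $\nu_i$-component down like $(\hat\lambda_{k+1}/\hat\lambda_i)^{2t}$; the final bound is a one-line computation with the normalization constants. You instead power-iterate first, then invert $M^{(T-1)} = V_k^\top X^{(T-1)}$ to align the columns, and then keep re-inverting to maintain alignment. Both arguments need the same unstated ingredients (a spectral gap $|\hat\lambda_{k+1}| < |\hat\lambda_k|$, and orthogonality of the $\nu_l$ even though $(I_n - \1\1^\top/n)A$ is not symmetric --- a point you rightly flag). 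The paper's version is shorter because it uses only $k$ non-trivial weight matrices and never has to control a coupled recursion; yours has the mild advantage that the maintenance phase makes $M^{(t)}$ exactly diagonal, so the target quantity is read off directly from the unit-norm constraint.

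The one step you should not wave through is the claim that the rotation step incurs an error of size $O(\delta)$ with $\delta = \|F^{(T-1)}\|_F$. During the identity-weight burn-in, power iteration aligns all $k$ columns of $M^{(t)}$ with the dominant direction, so $\sigma_{\min}(M^{(T-1)}) \to 0$ and $\|(M^{(T-1)})^{-1}\|$ grows geometrically in $T$ (roughly like $(|\hat\lambda_1|/|\hat\lambda_k|)^{T}$ when the top eigenvalues are distinct). The naive bound $\|F^{(T-1)}\|\cdot\|(M^{(T-1)})^{-1}\|$ therefore need not tend to zero --- e.g.\ it diverges when $|\hat\lambda_1||\hat\lambda_{k+1}| > |\hat\lambda_k|^2$. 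The fix is to bound the product exactly: with identity weights the diagonal normalizations cancel, and writing $X^{(0)} = \sum_l \nu_l\, \sigma^{(0)}_{l,:}$ one gets $F^{(T-1)}(M^{(T-1)})^{-1} = \bigl(\sum_{l>k}\hat\lambda_l^{\,T-1}\nu_l\, \sigma^{(0)}_{l,:}\bigr)(M^{(0)})^{-1}\hat\Lambda_k^{-(T-1)}$, whose $j$-th column carries factors $(\hat\lambda_l/\hat\lambda_j)^{T-1}$ with $|\hat\lambda_l/\hat\lambda_j| \le |\hat\lambda_{k+1}|/|\hat\lambda_k| < 1$. With that correction the rotation error is genuinely geometric in $T$, your maintenance recursion $f_{t+1} \le (|\hat\lambda_{k+1}|/|\hat\lambda_k|)\, f_t/\sqrt{1-f_t^2}$ contracts once $f_T < \sqrt{1 - |\hat\lambda_{k+1}|^2/|\hat\lambda_k|^2}$, and the proof closes.
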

The above result ties in nicely with recent results showing that GNNs are strengthened through \emph{positional encodings} \citep{dwivedi2021graph, dwivedi2023benchmarking}, where the features are augmented by the top-$k$ eigenvectors of the graph. This can, in some sense, be seen as emulating a deep GNN, which would converge to the top-$k$ eigenspace using BatchNorm.
Yet it is worth noting that while BatchNorm improves the practical expressive power of GNNs by converging to a larger subspace, under the type of convergence given in~\cref{thm:claim_convergence}, the information in the eigenvectors associated with small magnitude eigenvalues is still dampened after repeated message-passing. In the limit $t \rightarrow \infty$, the computation is thus still memoryless.

\begin{remark}
    If no initial node features $X^{(0)}$ are given by the dataset, common choices in practice are to initialize the features randomly or identically for each node. In the former case, the prerequisites of \ref{thm:claim_convergence} and \cref{thm:convergence_is_tight} are satisfied.  In the latter case, the conditions are not met, as $X^{(0)}$ has rank one. In that case, the system still converges. In fact, it retains its rank and converges to the dominant eigenvector of $(I_n - \1\1^\top/n) A$.
\end{remark}

\textbf{Comparison between normalization and residual connections.}
From previous sections, we have seen that both residual connections and batch normalization are able to prevent a complete collapse of the node embeddings to a one-dimensional subspace. In both cases, the embeddings converge to a larger subspace and thus oversmoothing is alleviated. However, it is clear that different mechanisms are at play to mitigate oversmoothing. With residual connections, the system is able to keep the dimensions of the initial input features by incorporating the initial features $X^{(0)}$ at each layer; while with normalization, the system converges to the subspace of the top eigenvectors of the message-passing operator through the scaling step.

\section{Centering Distorts the Graph}\label{sec: centering}

So far, we have analyzed the effects of residual connections and normalization layers on oversmoothing. Specifically, we have shown that the incorporation of initial features of residual connections and the scaling effect of normalization help alleviate complete rank collapse of node features.
However, there are two steps in normalization layers: centering and scaling. 
If already scaling helps preventing a complete collapse, a natural question is \emph{what is the role of centering in the process?} 
In this section, we will show that the current centering operation used in normalization layers can in fact have an undesirable effect altering the graph signal that message-passing can extract, as if message-passing happens on a different graph.

\subsection{Centering interferes the structural eigenvectors}
The centering operation in normalization layers takes away the (scaled) mean across all rows in each column, and thus can be written as applying the operator $I_n - \tau \1\1^\top/n$ to the input, where $\tau$ indicates how much mean is taken away in centering. Specifically, for $\tau = 1$ we recover BatchNorm, whereas for $\tau \in \mathbb{R}$, we recover GraphNorm \citep{cai2021graphnorm}. To analyze how this step would alter the graph signal message-pasing can extract, we make use of the concepts of quotient graph and structural eigenvectors as introduced in~\cref{sec: EP}.

 Given a symmetric, non-negative adjacency matrix $A \in \mathbb{R}^{n \times n}_{\geq 0}$,  let $H \in \{0,1\}^{n \times m}$ be the indicator matrix of its final WL coloring $c^{(\infty)}$.
 Consider the eigenpairs $\mathcal{V} = \{..., (\lambda, \nu), ...\}$ of $A$ and divide them into the set of structural eigenpairs
$\mathcal{V}_{\text{struc}} = \{(\lambda, \nu) \in \mathcal{V} \mid \nu = H \nu^\pi\}$, and the remaining eigenpairs $\mathcal{V}_{\text{rest}} = \mathcal{V} \backslash \mathcal{V}_{\text{struc}}$.
Similarly, let $\hat{\mathcal{V}} = \{..., (\hat{\lambda}, \hat{\nu}), ...\}$ be the eigenpairs of $(I_n - \tau \1 \1^\top/n)A$. 
We now analyze what happens to these distinct sets of eigenvectors when applying the centering operation. 
Notice that the parameter $\tau$ controls how much of the mean is taken away and thus how much the centering influences the input graph. 
However, \textit{as long as $\tau$ is not zero, there is always an effect altering the graph operator used in message-passing}:
\begin{proposition}
    \setlength\itemsep{-2pt}
    Assuming $\tau > 0$,\label{thm:centering_affects_structuralEVs}
    \vspace{-1ex}
    \begin{enumerate}
        \item $\mathcal{V}_{\text{rest}} \subset \hat{\mathcal{V}}$.
        \item Assume that $A$ is not regular, then the dominant eigenvector $\nu$ of $A$ is \textbf{not} an eigenvector in $\hat{\mathcal{V}}$ for any eigenvalue.
        \item $\sum_{(\lambda, \nu) \in \mathcal{V}_{\text{struc}}} \lambda > \sum_{(\hat{\lambda}, \hat{\nu}) \in \hat{\mathcal{V}} \backslash \mathcal{V}_{\text{rest}}} \hat{\lambda}$\,.
    \end{enumerate} 
    \label{prop:centering}
\end{proposition}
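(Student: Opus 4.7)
The plan hinges on one structural observation: the span of the structural eigenvectors coincides with $\mathrm{colspan}(H)$, which contains $\1 = H\1_m$. Hence non-structural eigenvectors lie in $\mathrm{colspan}(H)^\perp \subseteq \1^\perp$, and the centering operator $I_n - \tau\1\1^\top/n$ acts as the identity on $\1^\perp$. This pins down exactly what centering can and cannot change in the spectrum. To see the equality $\mathrm{span}(\mathcal{V}_\mathrm{struc}) = \mathrm{colspan}(H)$, note that $\mathrm{colspan}(H)$ is $A$-invariant by $AH = HA^\pi$, so the restriction of the symmetric operator $A$ to this subspace is diagonalizable; any eigenvector $w = Hv$ of $A|_{\mathrm{colspan}(H)}$ satisfies $HA^\pi v = \lambda H v$, and since $H$ has full column rank, $v$ is an eigenvector of $A^\pi$, making $w$ structural.

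For Part 1, take $(\lambda,\nu)\in\mathcal{V}_\mathrm{rest}$. Since $A$ is symmetric and $\mathrm{colspan}(H)$ is $A$-invariant, $\mathrm{colspan}(H)^\perp$ is also $A$-invariant, so $\nu\in\1^\perp$ and $(I_n - \tau\1\1^\top/n)A\nu = \lambda\nu - \tau\lambda(\1^\top\nu/n)\1 = \lambda\nu$, placing $(\lambda,\nu)\in\hat{\mathcal{V}}$. For Part 2, let $\nu$ be the dominant eigenvector of $A$ with eigenvalue $\lambda > 0$. By Perron--Frobenius $\nu$ has (strictly) positive entries, so $\1^\top\nu > 0$, and non-regularity of $A$ forces $\nu\not\propto\1$. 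If $\nu$ were also an eigenvector of $(I_n - \tau\1\1^\top/n)A$ with eigenvalue $\hat\lambda$, rearranging $(I_n - \tau\1\1^\top/n)A\nu = \hat\lambda\nu$ yields
\[(\lambda - \hat\lambda)\,\nu \;=\; \frac{\tau\lambda\,\1^\top\nu}{n}\,\1,\]
whose right-hand side is a nonzero multiple of $\1$ (since $\tau,\lambda,\1^\top\nu>0$), forcing the contradiction $\nu\propto\1$.

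For Part 3, use a trace argument. By Part 1 the $n - |\mathcal{V}_\mathrm{struc}|$ non-structural eigenvalues appear unchanged in $\mathrm{tr}\bigl((I_n - \tau\1\1^\top/n)A\bigr)$, so $\hat{\mathcal{V}}\backslash\mathcal{V}_\mathrm{rest}$ contributes the remaining $|\mathcal{V}_\mathrm{struc}|$ eigenvalues. Subtracting the two traces,
\[\sum_{(\lambda,\nu)\in\mathcal{V}_\mathrm{struc}} \lambda \;-\; \sum_{(\hat\lambda,\hat\nu)\in\hat{\mathcal{V}}\backslash\mathcal{V}_\mathrm{rest}} \hat\lambda \;=\; \mathrm{tr}(A) \;-\; \mathrm{tr}\bigl((I_n - \tau\1\1^\top/n)A\bigr) \;=\; \frac{\tau}{n}\,\1^\top A\,\1 \;>\; 0,\]
since $A$ is nonzero and nonnegative and $\tau>0$.

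The main subtlety lies in Part 2, where we need both $\1^\top\nu > 0$ and $\nu\not\propto\1$; these require invoking Perron--Frobenius under an implicit irreducibility/connectedness assumption on the graph, which is where the ``not regular'' hypothesis enters to rule out $\nu = \1$. Parts 1 and 3 are essentially bookkeeping once the orthogonal decomposition $\R^n = \mathrm{colspan}(H) \oplus \mathrm{colspan}(H)^\perp$ and the trace identity $\mathrm{tr}(\1\1^\top A) = \1^\top A\1$ are in hand.
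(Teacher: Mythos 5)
Your proof is correct and follows essentially the same route as the paper's: all three parts rest on showing that non-structural eigenvectors are orthogonal to $\1$ (so centering fixes them), the same component-wise contradiction for the dominant eigenvector, and the same trace comparison $\Tr(A)-\Tr\bigl((I_n-\tau\1\1^\top/n)A\bigr)=\tau\1^\top A\1/n>0$. The only cosmetic difference is in Part 1, where you derive the orthogonality from the $A$-invariance of $\mathrm{colspan}(H)$ and its orthogonal complement, whereas the paper expands $\1_m$ in the (linearly independent) eigenbasis of $A^\pi$ and uses pairwise orthogonality of the lifted eigenvectors.
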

\vspace{-2ex}
The authors of GraphNorm \cite{cai2021graphnorm} had previously analysed the case for regular graphs.
Here, we consider a general case where the graph is not regular, meaning there is more than one color in the final WL coloring $c^{(\infty)}$. 

Assuming constant initialization of node features, $\mathcal{V}_{\text{struc}}$ spans the space of all possible node features that a GNN can compute and~\cref{thm:centering_affects_structuralEVs} states that it is exactly the space that the centering acts on.
Specifically, while leaving the eigenvectors and eigenvalues in $\mathcal{V}_{\text{rest}}$ untouched, centering changes the eigenvector basis of the space spanned by $\mathcal{V}_{\text{struc}}$ in two ways: some vectors, such as the dominant eigenvector, are affected by this transformation and thus no longer convey the same information. 
At the same time, the centering transformation may change the magnitude of eigenvalues --- that is, the dominant eigenvector may not be dominant anymore. Meanwhile, the whole space is pushed downward in the spectrum, meaning that after the centering transformation, the signal components within the structural eigenvectors are dampened and thus become less pronounced in the node representations given by the GNN. 

Notably, such an effect altering the graph signal not only applies to BatchNorm with $\tau = 1$, but also GraphNorm~\citep{cai2021graphnorm}, which was proposed specifically for GNNs. 
In their paper, the authors address the problem that BatchNorm's centering operation completely nullifies the graph signal on regular graphs. 
Their remedy is to only subtract an $\tau$ portion of the mean instead.  
However, \cref{prop:centering} shows that a similar underlying problem altering the graph signal would persist for general graphs even switching from BatchNorm to GraphNorm.

\vspace{-2ex}
\paragraph{Comparison with standard neural networks}
The use of normalization techniques in GNNs was inspired by use of normalization methods in standard feed-forward neural networks~\citep{Ioffe2015BatchNA}. 
Here, we want to emphasize that the issue centering causes in GNNs as described above is not a problem for standard neural networks, as standard neural networks do not need to incorporate the graph information in the forward pass. As a result,
in standard neural networks, normalization transforms the input in a way that it does not affect the classification performance: for each neural network, there exists a neural network that yields an equivalent classification after normalization. 
However, this is not the case for GNNs. In GNNs, the graph information is added in through message-passing, which can be heavily altered by normalization as shown above. Such normalization can lead to information loss, negatively impacting the model's performance.

\subsection{Our method: \Graphnorm}

Based on our theoretical analysis, we propose a new normalization layer for GNNs which has a similar motivation as the original GraphNorm but improves the centering operation to not affect the graph information in message-passing. 
Specifically, instead of naive centering, which can be thought of as subtracting a projection to the all-ones space, we use a learned projection. 
Learning a completely general projection may have certain downsides, however. 
As graphs can have different sizes, we either would need to learn different projections for different graph sizes or use only part of the learned projection for smaller graphs. 
We thus opt for learning a centering that transforms the features by subtracting an $(\tau_j)_i$ portion of the $i$-th eigenvector from the $j$-th column. 
Our proposed graph normalization is thus:
\begin{equation*}
    \operatorname{\Graphnorm}(X_{:,j}) = \gamma_j \frac{X_{:,j} - (V_{k+} \tau_j \tau_j^\top V_{k+}^\top) X_{:,j}}{\sigma_j} + \beta_j\,,
\end{equation*}
where $\tau_j \in \R^{k+1}$ is a learnable parameter, $\sigma_j = \|X_{:,j} - (V_{k+} \tau_j \tau_j^\top V_{k+}^\top) X_{:,j}\|_2$, and $\gamma_j, \beta_j$ are the learnable affine parameters. 
Instead of just using the top-$k$ eigenvectors $V_k$ of the message-passing operator, we use $V_k$ and one additional vector $r = \1 - V_k V_k^\dagger \1$ such that $\1$ can be represented as a linear combination. This ensures backward compatibility, in that, \Graphnorm~can emulate GraphNorm and BatchNorm. We denote the set $V_{k+}\vcentcolon = V_k\cup\{r\}$.

\section{Numerical Experiments}
\label{sec:experiments}

\begin{figure}[tb!]
\centering\includegraphics[width=\columnwidth]{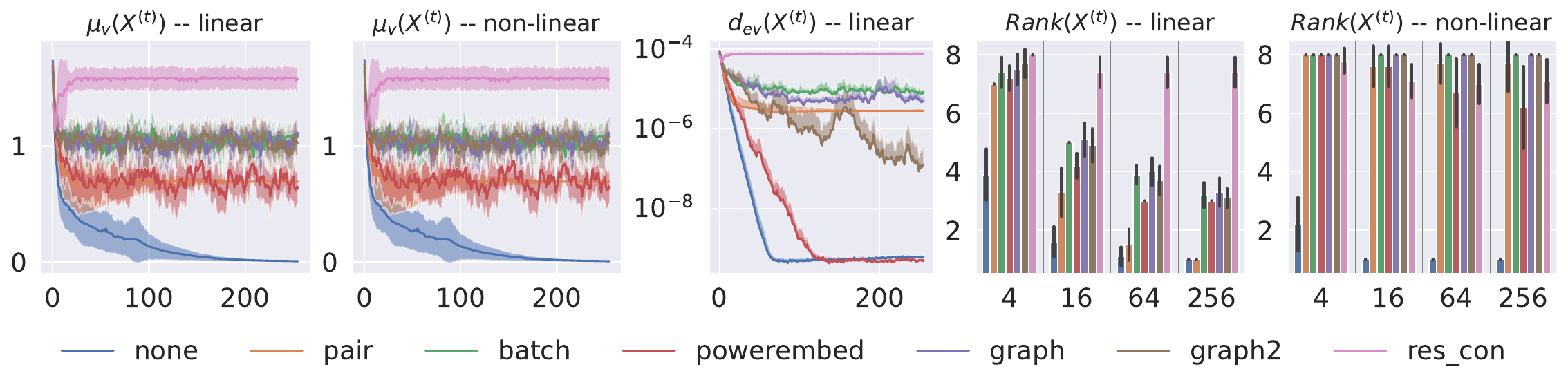}
    \caption{\footnotesize \textbf{Long Term behavior of GCN.} Mean progression (over 10 independent trials) of $\mu_v(X^{(t)})$ and $\Rank(X^{(t)})$ over $256$ iterations of message-passing in both linear and non-linear GCN, where $v$ corresponds to the dominant eigenvector for $D^{-1/2}A_{\adj}D^{-1/2}$. In the linear case, $\mu_v(X^{(t)})$ remains constant for all methods except the vanilla GCNs, indicating that complete collapse to the dominant eigenspace does not happen. However, PairNorm does collapse in terms of rank, while the other methods maintain a rank greater than $2$. All the phenomena are explained by our theory. In the non-linear case, the models behave similarly. Notably, centering seems to prevent rank collapse in the non-linear case as PairNorm no longer collapses in rank. }
    \label{fig:mu_and_col_dist}
\end{figure}

In this section, we investigate the benefits that can be derived from the proposed graph normalization method \Graphnorm. 
We examine the long term behavior of linear and non-linear GNNs by conducting an ablation study on randomly initialized, untrained GNNs. 
We then go on to inspect the practical relevance of our proposed method. More details about the experiments are provided in~\cref{app:exp}.
\vspace{-2ex}

\paragraph{Rank collapse in linear and non-linear GNNs} We investigate the effect of normalization in deep (linear) GNNs on the Cora dataset~\citep{Yang2016RevisitingSL}. 
We employ seven different architectures: an architecture using residual connections, BatchNorm~\citep{Ioffe2015BatchNA},
PowerEmbed~\citep{Huang2022FromLT}, PairNorm~\citep{Zhao2019PairNormTO}, 
GraphNorm~\citep{cai2021graphnorm}, \Graphnorm, and finally  no normalization as a baseline. 
Each of these methods is used in an untrained, randomly initialized GCN~\citep{kipf2016semi} (without biases) with 256 layers.

We compare these models using the following measures of convergence: $\mu_v(X^{(t)})$ as defined in~(\ref{eq:mu}), where $v$ is the dominant eigenvector, the eigenvector space projection:
$d_{\text{ev}}(X) \vcentcolon=  \frac{1}{n} \norm{X - V V^\top X}_F,$ where $V\in\R^{n\times n}$ is the set of normalized eigenvectors of $A$, and the numerical rank of the features $\Rank(X^{(t)})$.  
The results are shown in \Cref{fig:mu_and_col_dist}. The same experiment with both GIN and GAT can be found in the appendix together with additional measures of convergence.

The two left panels of \Cref{fig:mu_and_col_dist} show that the commonly considered metric for measuring oversmoothing $\mu_v(X)$ indeed detects the collapse of node features in vanilla GNNs to the dominant eigenspace spanned by $v$ of the message-passing operator, in both the linear and non-linear case. Nonetheless, for any other methods, it does not detect any collapse of the feature space — neither in the linear nor
non-linear case. 
However, the right two panels show PairNorm indeed also oversmoothes in the linear case. Specifically, it converges to the dominant eigenvector of $(I_n - \tau \1 \1^\top/n)D^{-\frac{1}{2}}A_{\adj}D^{-\frac{1}{2}})$. The other methods are able to preserve a rank greater than two over all iterations. However, they do converge to a low-dimensional subspace, e.g. PowerEmbed and \Graphnorm~ converges to the top-$k$ eigenvector space as can be seen in the middle panel. All of these phenomena are explained by our theoretical analysis. 
As for the non-linear case, the models behave similarly apart from two things: 
(a) there is no convergence to the linear subspace due to the non-linearity $\sigma(\cdot)$, although the rank can still be preserved. 
(b) The centering operation in PairNorm seems to prevent rank collapse in the non-linear case as PairNorm no longer collapses in rank.

\vspace{-2ex}

\begin{table}[t]
    \caption{\small\textbf{Performance under different normalization layers.} Performance of GIN, GCN, and GAT with different normalization layers on graph classification task (MUTAG, PROTEINS, and PTC-MR) and node classification task (Cora, CiteSeer, and ogbn-arxiv). Results are reported as the mean accuracy (in \%) $\pm$ std. over 10 independent trials and 5 folds. Best results are highlighted in blue; second best results are highlighted in gray.}
    \centering \large
    \resizebox{\columnwidth}{!}{
    \begin{NiceTabular}{c c| c c c | c c c | c c c}[colortbl-like]
    \toprule
    \multicolumn{2}{c}{} & \multicolumn{3}{c}{Graph Classification}    &  \multicolumn{3}{c}{Node Classification}  &  \multicolumn{3}{c}{Node Classification ($\#$ layers=20)} \\
     \toprule
     && MUTAG & PROTEINS & PTC-MR & Cora & CiteSeer & ogbn-arxiv & Cora & CiteSeer & ogbn-arxiv\\ 
    \toprule
     \multirow{7}{*}{GCN} &
     no norm &$78.2 \pm 7.8$&$70.5 \pm 4.0$& $\cellcolor{gray!20}57.7 \pm 3.0$ 
     & $82.6 \pm 4.6$ & $89.4 \pm 0.6$ & $67.4 \pm 0.2$
     & $46.0 \pm 11.5$ & $65.6 \pm 4.3$ & $6.0 \pm 6.5$\\
     & batch &$\cellcolor{gray!20}81.5 \pm 2.0$&$70.4 \pm 1.8$&$56.0 \pm 3.5$ 
     & $\cellcolor{gray!20}85.2 \pm 0.6$ & $89.4 \pm 0.7$ & $\cellcolor{best}{68.3 \pm 0.3}$
     & $\cellcolor{gray!20}84.0 \pm 1.6$ & $82.2 \pm 4.4$ & $\cellcolor{gray!20}{56.7 \pm 0.5}$\\
     & graph &$81.1 \pm 4.7$&$\cellcolor{gray!20}71.4 \pm 3.6$&$\cellcolor{best}{58.1 \pm 4.8}$ 
     & $85.1 \pm 0.8$ & $89.1 \pm 0.7$ & $68.3 \pm 0.2$
     & $80.5 \pm 6.3$ & $\cellcolor{gray!20}82.7 \pm 2.1$ & $56.3 \pm 0.8$\\
     & pair & $61.3 \pm 10.1 $ & $59.6 \pm 0.0 $ & $55.8 \pm 0.0 $
     & $84.2 \pm 0.7$ & $88.2 \pm 0.7$& $63.0 \pm 1.2$ 
     & $82.3 \pm 0.5$ & $74.0 \pm 14.1$ & $44.0 \pm 1.9$\\
     & group & $79.9 \pm 5.8 $ & $69.3 \pm 4.1$ & $55.7 \pm 4.1 $ & $\cellcolor{gray!20}85.5 \pm 0.9 $ & $89.4 \pm 0.6 $ &$65.1 \pm 0.4$& $64.4 \pm 10.9 $ & $80.7 \pm 1.1 $ & $6.0 \pm 6.2$\\
     & powerembed & ${82.4 \pm 3.7 }$ & $70.1 \pm 3.6$  & $55.2 \pm 4.8 $ & $85.4 \pm 0.8 $ & $\cellcolor{best}{89.7 \pm 0.7 }$ &$68.1 \pm 0.2$& $48.0 \pm 9.1 $ & $77.3 \pm 3.8 $ & $55.9 \pm 0.5$\\
     & \graphnorm &$\cellcolor{best}{82.6 \pm 4.6}$&$\cellcolor{best}{72.6 \pm 2.6}$&$56.5 \pm 4.8$ 
     & $\cellcolor{best}{85.8 \pm 0.5}$ & $\cellcolor{gray!20}{89.5 \pm 0.6}$& $\cellcolor{best}{68.3 \pm 0.3}$ 
     & $\cellcolor{best}{84.8 \pm 0.8}$ & $\cellcolor{best}{83.4 \pm 2.0}$ & $\cellcolor{best}{57.4 \pm 0.6}$ \\
     \midrule
     \multirow{7}{*}{GAT} &
     no norm &$78.5 \pm 5.3$&$\cellcolor{gray!20}71.2 \pm 2.0$& $\cellcolor{best}{61.3 \pm 4.2}$
     & $85.9 \pm 0.9$ & $93.7 \pm 0.5$ & $68.3 \pm 0.9$
     & $79.2 \pm 0.7$ & $89.1 \pm 0.5$ & $44.6 \pm 2.8$\\
     & batch &$\cellcolor{best}{82.5 \pm 4.1}$&$68.4 \pm 5.2$& $55.7 \pm 0.7$
     & $86.2 \pm 0.7$ & ${94.9 \pm 0.5}$ & $\cellcolor{best}{71.0 \pm 0.1}$
     & $85.8 \pm 0.7$ & $\cellcolor{gray!20}94.2 \pm 0.6$ & $55.7 \pm 0.5$\\
     & graph &$81.2 \pm 4.8$&$\cellcolor{best}{71.9 \pm 2.4}$&$\cellcolor{gray!20}60.3 \pm 5.4$
     & $86.3 \pm 0.9$ & $94.8 \pm 0.5$ & $\cellcolor{best}{71.0 \pm 0.1}$ 
     & $\cellcolor{best}{85.7 \pm 0.8}$ & $94.0 \pm 0.6$ & $\cellcolor{gray!20}55.9 \pm 0.6$\\
     & pair & $59.7 \pm 11.1 $ & $60.1 \pm 0.9 $ & $55.7 \pm 1.1 $
     & $85.4 \pm 1.0$ & $93.4 \pm 1.2$ & $70.0 \pm 0.2$ 
     & $\cellcolor{gray!20}84.9 \pm 0.7$ & $92.5 \pm 1.0$ & $6.2 \pm 0.9$ \\
     & group & $73.9 \pm 3.6 $ & $69.1 \pm 4.0$ & $59.8 \pm 3.8$ & $\cellcolor{best}{87.4 \pm 0.8}$ & $\cellcolor{secondbest}{95.3 \pm 0.5}$ &$\cellcolor{gray!20}70.9 \pm 0.1$& $85.1 \pm 0.7$ & $\cellcolor{gray!20}94.2 \pm 0.5$& $55.6 \pm 0.6$\\
     & powerembed & ${75.0 \pm 5.4 }$ & $69.6 \pm 4.0$  & $58.4 \pm 3.7 $ & $\cellcolor{secondbest}{87.0 \pm 0.8 }$ & $\cellcolor{best}{95.5 \pm 0.5}$ &$\cellcolor{best}{71.0 \pm 0.1}$& $85.4 \pm 0.6 $ & $93.8 \pm 0.4 $ & $55.6 \pm 0.5$\\
     & \graphnorm &$\cellcolor{gray!20}81.6 \pm 3.8$&$71.0 \pm 2.6$&$59.5 \pm 4.4$
     & ${86.3 \pm 0.8}$ & $94.8 \pm 0.5$ & $\cellcolor{gray!20}70.9 \pm 0.2$ 
     & $\cellcolor{best}{85.7 \pm 0.8}$ & $\cellcolor{best}{94.4 \pm 0.5}$ & $\cellcolor{best}57.5 \pm 0.6$\\
     \midrule
     \multirow{7}{*}{GIN} &
     no norm &$79.7 \pm 5.9$&$70.7 \pm 3.6$&$\cellcolor{gray!20}59.2 \pm 3.9$
     & $33.4 \pm 45.4$ & $47.0 \pm 3.1$ & $\cellcolor{gray!20}21.6 \pm 0.0$
     & $28.2 \pm 6.2$& $23.4 \pm 5.1 $ & $\cellcolor{gray!20}25.7 \pm 10.8$\\
     & batch &$82.5 \pm 5.0$&$69.2 \pm 5.3$&$52.9 \pm 5.3$
     & $\cellcolor{gray!20}68.4 \pm 4.0$ & $49.7 \pm 6.8$ & $21.1 \pm 7.9$
     & $\cellcolor{gray!20}31.4 \pm 5.4$ & $23.6 \pm 6.1 $ & $6.0 \pm 6.1$\\
     & graph &$\cellcolor{gray!20}83.7 \pm 4.2$&$\cellcolor{best}{72.6 \pm 2.4}$&$59.1 \pm 5.1$
     & $33.2 \pm 3.4$ & $\cellcolor{gray!20}63.0 \pm 11.0$ & $\cellcolor{gray!20}21.6 \pm 0.0$ 
     & $26.9 \pm 5.2$ & $27.1 \pm 4.6 $ & $6.2 \pm 0.8$\\
     & pair & $65.2 \pm 3.2 $ & $64.5 \pm 4.3 $ & $55.5 \pm 1.6 $
     & $37.2 \pm 3.5$ & $51.1 \pm 9.2$ & $20.7 \pm 9.6$
     & $28.6 \pm 6.2$ & $26.9 \pm 4.7 $ & $6.2 \pm 0.9$\\
     & group & $79.9 \pm 5.8 $ & $69.3 \pm 4.1$ & $58.2 \pm 2.7 $ & $85.9 \pm 0.7$ & $89.4 \pm 0.6 $ &$20.1 \pm 8.0$& $76.2 \pm 9.1 $ & $\cellcolor{best}94.0 \pm 0.6 $ & $6.0 \pm 8.1$\\
     & powerembed & ${82.4 \pm 3.7 }$ & $70.1 \pm 3.6$  & $59.8 \pm 3.5$ & $86.0 \pm 1.1 $ & ${89.7 \pm 0.7 }$ &$21.1 \pm 6.5$& $40.0 \pm 6.1 $ & $53.0 \pm 3.6$ & $6.0 \pm 6.1$\\
     & \graphnorm &$\cellcolor{best}{84.9 \pm 3.6}$&$\cellcolor{gray!20}71.0 \pm 3.6$&$\cellcolor{best}{60.1 \pm 5.5}$
      & $\cellcolor{best}{86.5 \pm 0.9}$ & $\cellcolor{best}{94.9 \pm 0.5}$ & $\cellcolor{best}{68.9 \pm 0.3}$
      & $\cellcolor{best}{83.8 \pm 2.0}$ & $\cellcolor{gray!20}{93.8 \pm 0.7}$ & $\cellcolor{best}56.6 \pm 0.7$\\
     \bottomrule
    \end{NiceTabular}}
    \label{tab:results_graph_classification}
    \vspace{-2pt}
\end{table}

\paragraph{Classification performance} We evaluate the effectiveness of our method~\Graphnorm~for real graph learning tasks. We perform graph classification tasks on the standard benchmark datasets MUTAG~\citep{Schlichtkrull2017ModelingRD}, PROTEINS~\citep{Morris2020TUDatasetAC} and PTC-MR~\citep{Bai2019UnsupervisedIG} as well as node classification tasks on Cora, Citeseer~\citep{Yang2016RevisitingSL} and large-scale ogbn-arxiv~\citep{hu2020ogb}. We compare~\Graphnorm~to $5$ normalization baselines: BatchNorm, PairNorm, GraphNorm, PowerEmbed and GroupNorm~\citep{Zhou2020TowardsDG}.
Following the general set-up of \citep{errica2019fair}, we investigate the performance of GIN, GCN and GAT in a 5-fold cross-validation setting. Details on hyperparameter tuning and other specifics can be found in~\cref{app:exp}. 
The final test scores are obtained as the mean scores across the 5 folds and 10 independent trials with the selected hyperparameters. We then also repeat the same experiment on Cora and Citeseer where we fix the depth of the models to $20$ layers. The results are reported in~\Cref{tab:results_graph_classification}.

\Cref{tab:results_graph_classification} shows improvements on most benchmarks for our proposed normalization technique \Graphnorm. Although our method yields weak performance improvements in certain cases, this trend is apparent across datasets and tasks and even seems to be independent of the architecture.  It is however worth mentioning that our method seems to not perform as well with GAT. This may be the result of GAT having both asymmetric and time-varying message-passing operators due to the attention mechanism~\citep{wu2023demystifying} --- both aspects are outside the scope of our current analysis. More experimental results on other GNN backbones and heterophilic datasets can be found in~\Cref{app: more_exp_results}.
\vspace{-3pt}

\section{Discussion}\vspace{-1pt}
In this paper, we have analyzed the effect of both residual connections and normalization layers in GNNs. We show that both methods provably alleviate oversmoothing through the incorporation of the initial features and the scaling operation, respectively. 
In addition, by identifying that the centering operation in a normalization layer alters the graph information in message-passing, we proposed \Graphnorm, a novel normalization layer which does not distort the graph and empirically verified its effectiveness.

Our experiments showed that the trends described by our theoretical analysis are visible even in the non-linear case. Future work may concern itself with closing the gap and explaining how centering together with non-linearity can prevent node representations from collapsing to a low-dimensional subspace.

\newpage 
\section*{Acknowledgements}
Michael Scholkemper and Michael Schaub acknowledge partial funding from the DFG RTG 2236 “UnRAVeL” – Uncertainty and Randomness in Algorithms, Verification and Logic.” and the Ministry of Culture and Science of North Rhine-Westphalia (NRW Ruckkehrprogramm). Xinyi Wu and Ali Jadbabaie were supported by ONR N00014-23-1-2299, ARO MURI W911 NF-19-1-0217, and a grant from Liberty Mutual.

\section*{Reproducibility}
For the theoretical statements made in the main text, a number of assumptions are made. These are detailed in \cref{sec:prelim} and \cref{sec:main}. All other needed assumptions are part of the respective statements. The proofs of all of these statements can be found in \cref{app: proofs}.

The code is made available \href{https://git.rwth-aachen.de/netsci/2025-residual-connections-and-normalization-can-provably-prevent-oversmoothing}{here}. 
The results in \cref{fig:mu_and_col_dist}, can be reproduced by running the \texttt{ablation\_study.ipynb} notebook in the supplementary material. Aggregating the statistics over multiple runs strengthens the reproducibility of the results.

The results in \cref{tab:results_graph_classification} can be reproduced by running the bash scripts \texttt{graph\_level\_norm.sh} and \texttt{node\_level\_norm.sh} respectively on a machine with a GPU. This will schedule \textbf{all} experiments reported in \cref{tab:results_graph_classification}. 

Further details regarding the experiments can be found in \cref{app:exp}.

\bibliography{iclr2025_conference}
\bibliographystyle{iclr2025_conference}

\appendix

\section{Further Background Materials}
\subsection{An algebraic perspective on the Weisfeiler-Leman coloring}
\label{sec:WL_extended}
We briefly restate part of what has already been stated in the main text. So that notation is clear again.
The Weisfeiler-Leman (WL) algorithm iteratively assigns a color $c(v) \in \mathbb{N}$ to each node $v\in V$ starting from a constant initial coloring $c^{(0)}(v) = 1 \ \ \forall v\in V$.
In each iteration, an update of the following form is computed: 
\begin{equation}
c^{(t+1)}(v) = \text{hash}\left(c^{(t)}(v), \{\!\{c^{(t)}(x) \mid x \in N(v)\}\!\}\right)
\label{app:eq:WL}
\end{equation}
where $\text{hash}$ is an injective hash function, and $\{\!\{\cdot\}\!\}$ denotes a multiset in which elements can appear more than once.
The algorithm returns the final colors $c^{(\infty)}$ when the partition $\{(c^{(t)})^{-1}(c^{(t)}(v)) | v \in V\}$ no longer changes for consecutive $t$.
citet{morris2019weisfeiler} and~citet{xu2018powerful} showed that GNNs cannot compute different features for nodes that are in the same class in the final coloring $c^{(\infty)}$.

For this paper, the equivalent algebraic perspective of the WL algorithm will be more useful: Given $c^{(\infty)}$ with $\text{Im}(c^{(\infty)}) = \{c_1, .., c_k\}$, define $H \in \{0,1\}^{n \times k}$ such that $H_{v, i} = 1$ if and only if $c^{(\infty)}(v) = c_i$. It holds that
\begin{equation}
    AH = H (H^\top H)^{-1} H^\top A H = HA^{\pi}\,,
    \label{app:eq:quotient graph}
\end{equation}
where $A^{\pi} := (H^\top H)^{-1} H^\top A H \in\mathbb{R}^{k\times k}$ is the adjacency matrix of the \textit{quotient graph}. Looking at this equation, there are two things of interest here. 

Firstly, $AH = HA^\pi$. Considering this equation more closely, the left side $AH$ counts, for each node, the number of neighbors of color $c_i$. More formally, 
\begin{equation*}
    (AH)_{v,i} = \sum_{x \in (c^{(\infty)})^{-1}(c_i)}[x \in N(v)] = \sum_{x \in N(v)} [c(x) = c_i]
\end{equation*}
where the Iverson bracket $[\cdot]$ returns $1$ if the statement is satisfied and $0$ if it is not.
The right hand side of the equation ($HA^\pi$) states that nodes in the same class have the same rows. It is not hard to verify that if $c^{(\infty)}(v) = c_i$ 
\begin{equation*}
    (HA^\pi)_{v,:} = (A^\pi)_{i,:}
\end{equation*}
Now, combining both sides of the equation, the statement $AH = HA^\pi$ reads, that the number of neighbors of any color that a node has is the same for all nodes of the same class meaning $\{\!\{c^{(t)}(x) \mid x \in N(v)\}\!\}$ is the same for all nodes of the class.
It becomes clear that at this point we have a fixed point of the WL update (\ref{app:eq:WL}) and conversely whenever we have such a fixed point, the equation $AH = HA^\pi$ holds.
The difference between the WL algorithm and (\ref{app:eq:quotient graph}) is that the latter equation holds for any so called \emph{equitable partition} while the WL algorithm converges to the \emph{coarsest} partition - meaning the partition with fewest distinct colors. For instance, on regular graphs, the WL algorithm returns the partition with $1$ color i.e. $H = \mathbb{1}$. However, the trivial partition with $n$ colors $H = I$ also fulfills (\ref{app:eq:quotient graph}).

Secondly $A^{\pi} := (H^\top H)^{-1} H^\top A H \in\mathbb{R}^{k\times k}$ is the adjacency matrix of the so-called \emph{quotient graph}. Noticing that $H^\top H = \diag(\{|(c^{(\infty)})^{-1}(c_i)|\}_{i \in [k]})$ it quickly becomes clear that $A^\pi$ is the graph of mean connectivity between colors. 
In other words, a supernode in the quotient graph represent a class of nodes in the original graph who share the same number of neighbors in each final coloring and an edge connecting two such supernodes is weighted by the number of edges there were going from nodes of the one color to nodes of the other color.
In this sense, the quotient graph is a compression of the graph structure. It now only has $k$ supernodes compared to the $n$ nodes that there were in the original graph. Still the quotient graph is an adequate depiction of the structure of the graph. 
Most relevantly, the adjacency matrix $A$ of the original graph inherits all eigenpairs from the quotient graph: 
Let $(\lambda, \nu^{\pi})$ be an eigenpair of $A^{\pi}$, then
\begin{equation*}
    AH\nu^{\pi} = HA^{\pi}\nu^{\pi} = \lambda H \nu^{\pi}\,.
\end{equation*}
We call such eigenvectors of $A$ the \textit{structural eigenvectors}. 
They are profoundly important and may even completely determine processes that move over the edges of a network. The structural eigenvectors span a linear subspace that is invariant to multiplication with $A$. This means that once inside this subspace a graph dynamical system cannot leave it. This holds true for GNNs as \citet{morris2019weisfeiler} and \citet{xu2018powerful} showed.

\subsection{Unifying Perspectives on Oversmoothing}
\label{sec:app_oversmoothing}
Oversmoothing is a phenomenon observed in Graph Neural Networks (GNNs) where, as the number of layers increases, the node representations tend to become indistinguishable from each other. This phenomenon has its root cause in the underlying linear system, where it is known that, given mild assumptions that the system is \emph{well-behaved} (ergodic and non-diverging), the linear system converges to the dominant eigenvector of the graph operator $A$. 

In different analyses of the topic, different measures of convergence are used. \citet{oono2019graph} used the distance from the dominant eigenvector space
\begin{equation}
    d_{\mathcal{M}}(X) = \operatorname{inf}\{\norm{X-Y} | Y \in \mathcal{M}\}
\end{equation}
where $\mathcal{M}$ is the dominant eigevector space. \citet{Cai2020ANO} used the graph Dirichlet energy 
\begin{equation*}
    \mathcal{E}(X) = \frac{1}{2}\sum_{e \in E}\left(\frac{X_{i,:}}{\sqrt{d_i}}-\frac{X_{j,:}}{\sqrt{d_j}} \right)^2
\end{equation*}
Finally, \citet{Rusch2023SurveyOS, wu2023demystifying} used the distance to the all-ones vector
\begin{equation*}
    \mu(X) = \norm{X - \frac{\1 \1^T}{n} X}^2_2
\end{equation*}
While all of these measures of oversmoothing look different, the idea behind them is similar: They characterize the convergence of the features to a rank 1 matrix by showing that the respective functions $\varphi(X^{(t)}) \rightarrow \infty$ as $t \rightarrow \infty$, where $\varphi \in \{d_{\mathcal{M}}, \mathcal{E}, \mu(X)\}$. On can further strengthen the statement by showing that $\varphi(X^{(t)}) \leq C_1 e^{-C_2 t}$, i.e. that the features converge exponentially fast.

In this paper, we turn our analysis to $\mu_v$ as defined in \cref{eq:mu}. We now show that each of the above definitions is equivalent to our in the sense that for any graph $G$ there exists a $v \in \mathbb{R}^n$ and $C_3, C_4 > 0$ such that
$$
C_3 \mu_v(X) \leq \varphi(X) \leq C_4 \mu_v(X)
$$
This is useful to our analysis, since $\varphi(X) = 0 \iff \mu_v(X) = 0$. Additionally, if we find a lower bound $\mu_v(X^{(t)}) > c$ (like in \cref{prop: mu_res_rw} and \ref{prop: mu_bn_rw}) this translates to a lower bound $\varphi(X^{(t)}) > c'$.

For $\varphi = \mu$ the equivalence is easy to see. Choosing $v = \1 / \sqrt{n}$, 
$$\mu_{v} = \norm{X - vv^T X}_2^2 =  \norm{X - \1\1^T/\sqrt{n}^2 X}_2^2 = \mu(X)$$
So $C_3 = C_4 = 1$. 

For $\varphi = d_{\mathcal{M}}$ this equivalence can be established if the graph is ergodic (strongly connected and aperiodic). Then, the dominant eigenspace is non-degenerate meaning there is a unique dominant eigenvector $\nu$ (up to scaling). Then,
$$
d_{\mathcal{M}} = \operatorname{inf}\{\norm{X-Y} | Y \in \mathcal{M}\} = \operatorname{inf}\{\norm{X-Y} | Y_i \in \{a \nu  | a \in \R\}\} = \norm{X- \nu \nu^T X} = \mu_\nu(X) 
$$
So again, $C_3 = C_4 = 1$.

Lastly, for $\varphi = \mathcal{E}$, given, that the graph is connected, it is quite easy to see that $\mathcal{E}(X) = 0 \iff X \in \Span(D^{\frac{1}{2}}\1)$. Thus, we choose $v = D^{\frac{1}{2}}\1 / \norm{D^{\frac{1}{2}}\1}$. For simplicity, we show this for $X \in \R^n$ with $\norm{X}_2 = 1$. The generalisation is straight forward. Let $d = \max_{i}D_{i,i}$:
\begin{equation*}
    \begin{aligned}
        \frac{\mu_v(X)}{2n^2 \sqrt{d}} &\leq  \max_{(u,x) \in E} \left( \frac{|X_u -v_u| + |X_x - v_x|}{2\sqrt{d}} \right)^2 \\
        &\leq  \max_{(u,x) \in E} \left( |\frac{X_u}{2\sqrt{D_{u,u}}} - \frac{X_x}{2\sqrt{D_{x,x}}}| \right)^2 \\
        &\leq \frac{1}{2}\sum_{e \in E}\left(\frac{X_{i,:}}{\sqrt{d_i}}-\frac{X_{j,:}}{\sqrt{d_j}} \right)^2 = \mathcal{E}(X) \leq \mu_v(X)
    \end{aligned}
\end{equation*}
This shows that our findings (\ref{prop: mu_res_rw}, \ref{prop: mu_bn_rw}) that show that oversmoothing does not happen when using BatchNorm or residual connections carry over to GCN and attention-based GNNs for which the oversmoothing was shown to be a problem previously.

\section{Proofs}\label{app: proofs}

\subsection{Proof for Proposition~\ref{prop: mu_res_rw}}
\MUCONNDOESNOTCOLLAPSE*
To prove this proposition, we first need to prove the following lemma: 
\begin{lemma}
\label{lemma:res_conn_lemma}
    Let $q \in \mathbb{R}^n$ and $\|q^T X^{(0)}\|^2_2 > 0$ and let each $(W^{(t)}_{l})_{y,z} \overset{\mathrm{i.i.d.}}{\sim} \mathcal{N}(\eta, s^2)$. 
    Then  for any $\epsilon>0$, $\norm{q^\top X^{(t)}}_2 \geq \alpha \cdot \epsilon \cdot \|q^T X^{(0)}\|_2^2$ with probability at least $p = 1 - \exp\left(-\frac{\epsilon^2}{2 s^2}\right)$.
\end{lemma}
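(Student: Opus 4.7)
The plan is to isolate the ``freshest'' source of randomness at layer $t$, namely $W_2^{(t-1)}$, and exploit its independence from the past to derive an anti-concentration bound. Unfolding \eqref{eq: res_def} and multiplying on the left by $q^\top$ yields
\begin{equation*}
q^\top X^{(t)} \;=\; \underbrace{(1-\alpha)\bigl(q^\top A X^{(t-1)}\bigr) W_1^{(t-1)}}_{=:\,M \,\in\, \mathbb{R}^{1 \times k}} \;+\; \alpha\,c\,W_2^{(t-1)}, \qquad c := q^\top X^{(0)}.
\end{equation*}
Let $\mathcal{F}$ denote the $\sigma$-algebra generated by $X^{(0)}$ and all weight matrices \emph{except} $W_2^{(t-1)}$. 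Then $M$ is $\mathcal{F}$-measurable while $W_2^{(t-1)}$ is independent of $\mathcal{F}$, so conditionally on $\mathcal{F}$ the residual contribution $\alpha c\,W_2^{(t-1)} \in \mathbb{R}^{1\times k}$ has i.i.d.\ Gaussian entries with mean $\alpha \eta \sum_i c_i$ and variance $\alpha^2 s^2 \|c\|_2^2$.

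Next I would reduce to a one-dimensional anti-concentration question by projecting onto a unit vector $u \in \mathbb{R}^k$ chosen in the orthogonal complement of $\operatorname{span}\{M^\top,\, \mathbf{1}_k\}$; this complement is non-trivial as soon as $k \geq 3$, and the degenerate low-dimensional cases can be handled separately. With this choice the deterministic $Mu$ contribution drops out and the mean shift $\alpha \eta (\sum_i c_i)\mathbf{1}_k^\top u$ is annihilated, so
\begin{equation*}
q^\top X^{(t)}\,u \;\sim\; \mathcal{N}\!\left(0,\ \alpha^2 s^2 \|c\|_2^2\right)
\end{equation*}
conditional on $\mathcal{F}$. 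Cauchy--Schwarz then gives $\|q^\top X^{(t)}\|_2 \geq |q^\top X^{(t)}\,u|$, i.e.\ the full norm is bounded below by the absolute value of a centered Gaussian of standard deviation $\alpha s \|c\|_2$.

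The final step is a standard Gaussian tail estimate. Using $\Pr(|g| > x) \leq \exp(-x^2/2)$ for $g \sim \mathcal{N}(0,1)$, and plugging in $x$ so that the target threshold equals $x$ standard deviations of $q^\top X^{(t)}\,u$, the $\alpha$ and $\|c\|_2$ factors cancel in the exponent and one is left with $\exp(-\epsilon^2/(2s^2))$. Complementing gives the stated high-probability lower bound on $\|q^\top X^{(t)}\|_2$; and because every step of the argument is uniform over the conditioning event $\mathcal{F}$, the tower property upgrades the conclusion to an unconditional statement.

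The main obstacle I expect is the ``mean-killing'' step when $\eta \neq 0$: if $M^\top$ and $\mathbf{1}_k$ happen to span all of $\mathbb{R}^k$ (automatic when $k \leq 2$), no $u$ can simultaneously annihilate $M$ and zero out the mean shift. My plan here would be to relax to $u \perp M^\top$ only and use the elementary observation that for any $Y \sim \mathcal{N}(\mu,\sigma^2)$ at least one of the one-sided tails $\Pr(Y \geq \delta)$ or $\Pr(Y \leq -\delta)$ is bounded below by the centered tail $\Pr(|g|\geq \delta/\sigma)$ (depending on the sign of $\mu$), so $\Pr(|Y|\geq \delta)$ loses at most a constant factor that can be absorbed into the exponent by a harmless rescaling of $\epsilon$. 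The very small-$k$ corner cases then reduce to direct one-variable computations.
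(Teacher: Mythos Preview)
Your high-level strategy coincides with the paper's: write $q^\top X^{(t)}=M+\alpha\,c\,W_2^{(t-1)}$ with $c=q^\top X^{(0)}$, condition on everything but $W_2^{(t-1)}$, and anti-concentrate the resulting Gaussian row vector. The only difference is how you pass to a scalar. You project onto a unit $u\perp\{M^\top,\mathbf 1_k\}$ so as to annihilate both the offset $M$ and the mean shift $\alpha\eta(\sum_i c_i)\mathbf 1_k$, which buys a clean centered Gaussian but forces $k\ge 3$ and the degenerate-case bookkeeping you flag. The paper does something simpler: it bounds $\|\cdot\|_2\ge\|\cdot\|_\infty$, takes an arbitrary coordinate $j$, and accepts that the resulting scalar is Gaussian with \emph{some} mean and variance at least $\alpha^2 s^2$ (after its $\hat q$ rescaling); it then compares to a centered $\mathcal N(0,s^2)$ via the fact that $\Pr(|Y|\ge\delta)$ for $Y\sim\mathcal N(\mu,\sigma^2)$ is minimized at $\mu=0$ and is monotone in $\sigma$, so the comparison is lossless with no dimension restriction and no case split. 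This is exactly the ``elementary observation'' you propose as a fallback in your last paragraph; the paper simply uses it from the outset and never needs the orthogonal-projection detour. One bookkeeping remark: your claim that the $\|c\|_2$ factors cancel is off by one power (the standardized threshold comes out to $\epsilon\|c\|_2/s$, not $\epsilon/s$), but the paper's own argument has the matching inconsistency, so this traces to the squared exponent on $\|q^\top X^{(0)}\|_2$ in the lemma statement rather than to any defect in your route.
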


\begin{proof}
    We start by deconstructing $X^{(t)}$ as 
    \begin{equation*}
        X^{(t)} = (1-\alpha)A X^{(t-1)} W_1^{(t-1)} + \alpha X^{(0)}W_2^{(t-1)}
    \end{equation*}
    Define $\hat{q} = \frac{1}{\|q^T X^{(0)}\|_2^2} q $. We then have: 
    \begin{equation}
    \begin{aligned}
         \hat{q}^\top X^{(t)} &= (1-\alpha)\hat{q}^\top A X^{(t-1)} W_1^{(t-1)} + \alpha  \hat{q}^\top X^{(0)}W_2^{(t-1)}\\
         &= \varphi + \alpha  \hat{q}^\top X^{(0)}W_2^{(t-1)}
    \end{aligned}
    \end{equation}
    Resulting in:
    \begin{equation}
    \begin{aligned}
         \norm{\hat{q}^\top X^{(t)}}_2 \geq \norm{\hat{q}^\top X^{(t)}}_\infty &= \norm{\varphi + \alpha  \hat{q}^\top X^{(0)}W_2^{(t-1)}}_\infty \\
         &= \max_j |\varphi_j + \alpha  (\hat{q}^\top X^{(0)}W_2^{(t-1)})_j|\\
         &\geq |\varphi_j + \alpha  (\hat{q}^\top X^{(0)}W_2^{(t-1)})_j|\\
         &= |\varphi_j + \alpha  \hat{q}^\top (X^{(0)}W_2^{(t-1)})_{:,j}|\\
         &= |\varphi_j + \alpha  \hat{q}^\top X^{(0)}(W_2^{(t-1)})_{:,j}|\\
         &= |\varphi_j + \alpha  \sum_a (\hat{q}^\top)_a (X^{(0)})_{a,:}(W_2^{(t-1)})_{:,j}|\\
         &= |\varphi_j + \alpha  \sum_b \sum_a (\hat{q}^\top)_a (X^{(0)})_{a,b}(W_2^{(t-1)})_{b,j}| = |\hat{Z}|
    \end{aligned}
    \end{equation}
    $\hat{Z}$ is a weighted sum of Gaussian variables and as such, is Gaussian itself ($\mathcal{N}(\hat{\eta}, \hat{s}^2)$) with mean $\hat{\eta}= \eta(\varphi_j) + \eta \sum_b \sum_a (\hat{q}^\top)_a (X^{(0)})_{a,b} $ and variance $ \hat{s}^2 = s^2(\varphi_j) + \alpha^2 s^2 \sum_b (\sum_a (\hat{q}^\top)_a (X^{(0)})_{a,b})^2$. Because we defined $\hat{q} = \frac{1}{\|q^T X^{(0)}\|_2^2} q$, we have that $\sum_b (\sum_a (\hat{q}^\top)_a (X^{(0)})_{a,b})^2 = \|q^T X^{(0)}\|_2^2 \geq 1$ and as such $\hat{s}^2 \geq \alpha^2 s^2$.
    
    Define the helper variable $\hat{Z}/\alpha$, which is Gaussian with $\hat{Z}/\alpha \sim \mathcal{N}(\frac{\hat{\eta}}{\alpha}, \frac{\hat{s^2}}{\alpha^2})$ and define the variable $Z \sim \mathcal{N}(0,s^2)$. Notice that $\hat{Z}/\alpha$ has higher variance than $Z$.
    To finish the proof, notice that 
    \begin{equation*}
        Pr(|\hat{Z}| \geq \alpha\epsilon) = Pr(|\hat{Z}|/\alpha \geq \epsilon) \geq Pr(|Z| \geq \epsilon) = 1 - Pr(|Z| < \epsilon) \geq 1 - \exp\left(-\frac{\epsilon^2}{2s^2}\right)\,,
    \end{equation*} 
    where the last inequality is based on the Chernoff Bound. To finish the proof, notice that:
    $$
    \norm{\hat{q}^\top X^{(t)}}_2 = \frac{\norm{q^\top X^{(t)}}_2}{\|q^T X^{(0)}\|_2^2}
    $$
    and thus:
    \begin{equation*}
    \begin{aligned}
        Pr(\norm{\hat{q}^\top X^{(t)}}_2 \geq \epsilon \alpha) &\geq Pr(\frac{\norm{q^\top X^{(t)}}_2}{\|q^T X^{(0)}\|_2^2} \geq \epsilon \alpha) \\
        &= Pr(\norm{q^\top X^{(t)}}_2\geq \epsilon \alpha\cdot \|q^T X^{(0)}\|_2^2) \geq 1 - \exp\left(-\frac{\epsilon^2}{2s^2}\right)\
    \end{aligned}
    \end{equation*}
    
\end{proof}

We can now resume the proof of \cref{prop: mu_res_rw}.

\begin{proof}
    Let the weights be randomly initialized $(W^{(t)}_{l})_{y,z} \overset{\mathrm{i.i.d.}}{\sim} \mathcal{N}(\eta, s^2)$. 
    Assume 
    $$\mu_v(X^{(0)}) = \|(I- vv^\top)X^{(0)} \|^2_F = \sum_{i=1}^k \|(I- vv^\top)X^{(0)}_{:,i} \|^2_2 > 0$$
    There must be a column $X^{(0)}_{:,i}$ such that $X^{(0)}_{:,i}$ is not a scaled version of $v$. Equivalently, $X^{(0)}_{:,i} = a v + b r$, where $v^T r = 0$ and $(X^{(0)}_{:,i})^T r = b > 0$. 
    Using \cref{lemma:res_conn_lemma}, we get that $ \norm{r^\top X^{(t)}}_2 \geq \alpha \cdot \epsilon \cdot \|r^T X^{(0)}\|_2^2$ with probability $p = 1 - \exp\left(-\frac{\epsilon^2}{2 s^2}\right)$. By construction, $\|r^T X^{(0)}\|_2^2 > 0$ and we choose $\epsilon > 0$ such that $p = 0.9$. Thus, $\alpha \cdot \epsilon \cdot \|r^T X^{(0)}\|_2^2 = c > 0$. Lets now look at $\mu_v(X^{(t)})$. Since $\norm{v} = 1$:
    \begin{equation}
        \begin{aligned}
            \|v^T X^{(t)}\|_F^2 \leq \norm{v^T X^{(t)}}_2^2 - \norm{r^\top X^{(t)}}_2^2 \leq \norm{v^T X^{(t)}}_2^2 - c^2
        \end{aligned}
    \end{equation}
    This means that 
    $$\norm{vv^T X^{(t)}}_2^2\leq\norm{X^{(t)}}_2^2 - c^2$$
    And thus:
    $$\mu_v(X^{(t)}) = \norm{X^{(t)} - vv^T X^{(t)}}_2^2 \geq \norm{X^{(t)}}_2^2 - \norm{vv^T X^{(t)}}_2^2 \geq c^2$$

\end{proof}

\subsection{Proposition~\ref{prop: mu_res_rw}: deterministic case}\label{app: res_general}

For the deterministic version, we adopt the following regularity conditions on weight matrices:
\begin{assumption}
\label{assumption:res_weights}
    For the system described in (\ref{eq: res_def}), assume: there exists $\epsilon > 0$ such that
    \begin{enumerate}
        \item $\sum_{m=0}^t \alpha (1-\alpha)^m \lambda_i^m  W_2^{(t-m)}W_1^{(t-m+1)}\cdots W_1^{(t)} + (1-\alpha)^{t+1}    \lambda_i^{t+1}W_1^{(0)}\cdots W_1^{(t)}$ has smallest singular value $\sigma_{\min} \geq \epsilon$ for all $i\in[n]$.
        \item $\sum_{m=0}^t \alpha (1-\alpha)^m \lambda_i^m W_2^{(t-m)}W_1^{(t-m+1)}\cdots W_1^{(t)} + (1-\alpha)^{t+1}\lambda_i^{t+1}W_1^{(0)}\cdots   W_1^{(t)}$ converges as $t\to\infty$ and 
      has smallest singular value $\sigma_{\min} \geq \epsilon$ for all $i\in[n]$.
    \end{enumerate}
\end{assumption}
Suppose $A$ is full-rank. If weights $W_1^{(t)},W_2^{(t)}$  are orthogonal, then~\cref{assumption:res_weights}.1 holds. On the other hand, \cref{assumption:res_weights}.2 is an asymptotic technical condition to ensure that weights are non-collapsing and non-diverging in the limit. Some ways to satisfy the assumptions is to  have the spectral radius of $A$, $\rho(A) \leq 1$ and $W_1^{(t)}, W_2^{(t)} = I_k$ for any $t\geq 0$.

We restate~\cref{prop: mu_res_rw} with full conditions: let $V\in\R^{n\times n}$ be the matrix of eigenvectors for $A$.

\begin{proposition}
 Under~\cref{assumption:res_weights}, let $\nu_q \in V$ be such that $\nu_q^\top v = 0$. If $X^{(0)}$ is properly initialized, such as if $X^{(0)}$ is not the zero matrix and  $\|\nu_q^\top X^{(0)}\|_2 = c > 0$, then $\mu_v(X^{(t)}) \geq c\epsilon/\sqrt{k}$ for all $t\geq 0$ and $\lim_{t\to\infty}\mu_v(X^{(t)}) \geq c\epsilon/\sqrt{k}$.
\label{app: prop: res_conn_mu}
\end{proposition}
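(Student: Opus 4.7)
The plan is to unroll the recursion~(\ref{eq: res_def}) explicitly to obtain a closed-form expression for $X^{(t+1)}$ in terms of $X^{(0)}$ and the weight matrices, then isolate the action on a carefully chosen eigencomponent of $A$ that is orthogonal to $v$. By a straightforward induction,
\[
X^{(t+1)} = (1-\alpha)^{t+1} A^{t+1} X^{(0)} W_1^{(0)} \cdots W_1^{(t)} + \sum_{m=0}^{t} \alpha (1-\alpha)^m A^m X^{(0)} W_2^{(t-m)} W_1^{(t-m+1)} \cdots W_1^{(t)}.
\]
Multiplying on the left by $\nu_q^\top$, since $\nu_q$ is an eigenvector of $A$ with eigenvalue $\lambda_q$, the $A^m$ factors become $\lambda_q^m$ and can be pulled out, yielding
\[
\nu_q^\top X^{(t+1)} = \nu_q^\top X^{(0)} \, M^{(t)},
\]
where $M^{(t)}$ is exactly the matrix described in Assumption~\ref{assumption:res_weights}.

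Next I would invoke Assumption~\ref{assumption:res_weights}.1 to conclude that $\sigma_{\min}(M^{(t)}) \ge \epsilon$ for all finite $t$, which immediately gives $\|\nu_q^\top X^{(t+1)}\|_2 \ge \epsilon \,\|\nu_q^\top X^{(0)}\|_2 = \epsilon c$. For the asymptotic claim $\lim_{t\to\infty}\mu_v(X^{(t)}) \ge c\epsilon/\sqrt{k}$, I would apply Assumption~\ref{assumption:res_weights}.2 in the same way: since $M^{(t)}$ converges with its smallest singular value staying bounded below by $\epsilon$, the bound persists in the limit.

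The last step is to convert the lower bound on $\|\nu_q^\top X^{(t)}\|_2$ into a lower bound on $\mu_v(X^{(t)})$. Because $\nu_q^\top v = 0$ and $\|\nu_q\|_2 = 1$, we have $\nu_q^\top (I - vv^\top) = \nu_q^\top$, and Cauchy--Schwarz applied column-wise yields
\[
\|\nu_q^\top X^{(t)}\|_2^2 = \sum_{i=1}^{k} (\nu_q^\top (I-vv^\top) X^{(t)}_{:,i})^2 \le \sum_{i=1}^{k} \|(I-vv^\top) X^{(t)}_{:,i}\|_2^2 = \mu_v(X^{(t)}),
\]
from which the advertised bound follows (with an appropriate $\sqrt{k}$ normalization absorbed into the constant as stated).

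The only non-routine ingredient is the singular value bound in Assumption~\ref{assumption:res_weights}: once one recognizes that the matrix $M^{(t)}$ arising from unrolling the recursion is precisely what the assumption controls, the rest is bookkeeping. The main subtlety I anticipate is handling the limit in part~2 carefully — specifically, justifying that $\|\nu_q^\top X^{(t)}\|_2$ inherits its lower bound uniformly in $t$ from the convergence of $M^{(t)}$ together with the uniform singular-value bound, which is needed to pass the inequality through to $\lim_{t \to \infty} \mu_v(X^{(t)})$.
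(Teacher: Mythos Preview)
Your proposal is correct and follows the same approach as the paper: unroll the recursion, project onto $\nu_q$ to extract exactly the matrix controlled by Assumption~\ref{assumption:res_weights}, apply the singular-value lower bound to get $\|\nu_q^\top X^{(t)}\|_2 \ge c\epsilon$, and convert to a lower bound on $\mu_v$ using $\nu_q \perp v$. The only difference is in that last conversion: the paper passes through $\|\nu_q^\top X^{(t)}\|_\infty \ge c\epsilon/\sqrt{k}$ and then bounds $\mu_v$ by a single column (this is the origin of the $\sqrt{k}$), whereas your direct Cauchy--Schwarz argument actually yields the tighter bound $\mu_v(X^{(t)}) \ge c\epsilon$ with no $\sqrt{k}$ loss, so nothing needs to be ``absorbed.''
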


In particular, if $v\in V$ and and there exists $\nu_q \in V$ such that $\nu_q \neq v$ and $\|\nu_q^\top X^{(0)}\|_2 = c > 0$, meaning that if the initial feature $X^{(0)}$ contains a component in eigenvector $\nu_q$ other than $v$, that signal would always be maintained in $X^{(t)}$, even asymptotically.

\begin{proof}
Writing~(\ref{eq: res_def}) recursively, we get that
\begin{align*}
    X^{(t+1)} 
   & = \alpha\sum_{m=0}^{t}(1-\alpha)^m A^mX^{(0)}W_2^{(t-m)}W_1^{(t-m+1)}...W_1^{(t)} \\
    & + (1-\alpha)^{t+1}A^{t+1}X^{(0)}W_1^{(0)}...W_1^{(t)}\,.
\end{align*}

    For each column  $X^{(t+1)}_{:,i}$, similarly, one can prove by induction that 
\[X^{(t+1)}_{:,i} = \sum_{l=1}^n \sum_{m=0}^t \sigma_{l,i}^{(t, m)}\lambda_l^{m}\nu_l\,,\]

where
\begin{itemize}
    \item $\Sigma^{(0, 0)} = V^\top X^{(0)}$,
    \item $\Sigma^{(t,0)} = \alpha \Sigma^{(0, 0)}W_2^{(t-1)}$ for all $t\geq 0$,
    \item $\Sigma^{(t,m)} = \alpha (1-\alpha)^m \Sigma^{(0, 0)}W_2^{(t-m-1)}W_1^{(t-m)}...W_1^{(t-1)}$, for all $1\leq m \leq t-1$,
    \item $\Sigma^{(t,t)} = (1-\alpha)^t \Sigma^{(0, 0)} W_1^{(0)}...W_1^{(t-1)}$.
\end{itemize}

Then $\nu_q^\top X^{(t)}$
\begin{equation}
    = \nu_q^\top X^{(0)} \left(\sum_{m=0}^t \alpha (1-\alpha)^m \lambda_i^m  W_2^{(t-m)}W_1^{(t-m+1)}\cdots W_1^{(t)} + (1-\alpha)^{t+1}    \lambda_i^{t+1}W_1^{(0)}\cdots W_1^{(t)}\right).
    \label{eq: res_proj_to_espace}
\end{equation}

Since by construction, $\|\nu_q^\top X^{(0)}\|_2 = c$, it follows from the regularity conditions on weights that 
\[\|\nu_q^\top X^{(t)}\|_2 \geq c\epsilon\,.\]

This implies that 
\[\|\nu_q^\top X^{(t)}\|_\infty \geq c\epsilon/\sqrt{k}\,,\]
which means that there exists $i\in[k]$ such that 
\[\left|\nu_q^\top X_{:,i}^{(t)}\right| = c\epsilon/\sqrt{k}\,.\]
Note that since $\nu_q\in V$ and $\nu_q^\top v = 0$, we get that 
\begin{align*}
   \mu_v(X^{(t)}) &=  \norm{X^{(t)} -  vv^\top X^{(t)}}_F = \sqrt{\sum_{l=1}^k\norm{X_{:,l}^{(t)} - vv^\top X_{:,l}^{(t)}}^2_2}\\
    & \geq \sqrt{\norm{X_{:,i}^{(t)} - vv^\top X_{:,i}^{(t)}}^2_2}\\
    & \geq \left|\nu_q^\top X_{:,i}^{(t)}\right|
\end{align*}
which means that $\mu_v(X^{(t)})\geq c\epsilon/\sqrt{k}$.

Similarly, we can show that $\lim_{t\to\infty }\mu_v(X^{(t)}) \geq c\epsilon/\sqrt{k}$.
\end{proof}

\subsection{Proof for Proposition~\ref{prop: res_conn_finite_2}}

\RESCONVERGENCEFINITETWO*
\begin{proof}
    Notice that $\norm{x_i^T X^{(0)}}_2^2 \geq \norm{x_i^T X^{(0)}_{:,i}}_2^2 = 1$. Using \cref{lemma:res_conn_lemma}, we get that $\norm{x_i^t X^{(t)}}_2^2 > \alpha \epsilon$ with probability at least $p = 1 - \exp\left( -\frac{\epsilon^2}{2s^2}\right)$. From there it is easy to see that that $\norm{x_i^t X^{(t)}}_2^2 > \epsilon$ with probability at least $p = 1 - \exp\left( -\frac{\epsilon^2}{2s^2\alpha^2}\right)$.
\end{proof}

\subsection{Proposition~\ref{prop: res_conn_finite_2}: deterministic case}
We complement~\ref{prop: res_conn_finite_2} with the following result. Let $V \in \mathbb{R}^{n \times n}$ be the matrix of eigenvectors of $A$ and define \[V^{\star}\vcentcolon = \{\nu_q\in V: \norm{\nu_q^\top X^{(0)}  }_2 = c_q\}\]
\[V^{0}\vcentcolon = \{\nu_p\in V: \norm{\nu_p^\top X^{(0)}  }_2 = 0\}\,,\]
where $c_q >0$ for all $q$. In words, $V^{\star}$ is the set of eigenspaces of $A$ onto which $X^{(0)}$ has a non-trivial projection, and $V^{0}$ is the set of eigenspaces of $A$ onto which $X^{(0)}$ has no projection.
\begin{proposition} Under~\cref{assumption:res_weights}.1, for all $t\geq 1$,
\[\norm{\nu_q^\top X^{(t)}  }_2 = c_q \epsilon, \forall \nu_q\in V^{\star},
\quad \norm{\nu_p^\top X^{(t)}  }_2 = 0, \forall \nu_p\in V^{0}. \]   

Under~\cref{assumption:res_weights}.2,
    \[\lim_{t\to \infty}\norm{\nu_q^\top X^{(t)}  }_2 > c_q\epsilon, \forall \nu_q\in V^{\star}\,, \quad \lim_{t\to \infty}\norm{\nu_p^\top X^{(t)}  }_2 = 0, \forall \nu_p\in V^{0}\,. \]

\end{proposition}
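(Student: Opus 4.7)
The plan is to build on the closed-form decomposition of $X^{(t)}$ already derived in the proof of Proposition~\ref{app: prop: res_conn_mu} and reduce each of the four sub-claims to a one-line singular-value bound. Unrolling the recursion~(\ref{eq: res_def}) exactly as in equation~(\ref{eq: res_proj_to_espace}), I would record that for every eigenvector $\nu_r$ of $A$ with eigenvalue $\lambda_r$,
\[\nu_r^\top X^{(t)} \;=\; (\nu_r^\top X^{(0)})\, M_r^{(t)},\]
where $M_r^{(t)}$ is precisely the $k\times k$ weight-product matrix whose smallest singular value is controlled in Assumption~\ref{assumption:res_weights} (with $\lambda_i$ there read as $\lambda_r$). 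This identity is the only thing the rest of the argument uses.

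The zero-projection case is then immediate. If $\nu_p \in V^0$, then $\nu_p^\top X^{(0)}$ is the zero row vector by definition of $V^0$, so the identity above forces $\nu_p^\top X^{(t)} = 0$ for every $t\geq 1$, giving both $\norm{\nu_p^\top X^{(t)}}_2 = 0$ at every finite $t$ and in the limit. For $\nu_q \in V^\star$, I would apply the elementary bound $\norm{u M}_2 \geq \sigma_{\min}(M)\,\norm{u}_2$ with $u = \nu_q^\top X^{(0)}$ and $\norm{u}_2 = c_q$; combined with $\sigma_{\min}(M_q^{(t)}) \geq \epsilon$ under Assumption~\ref{assumption:res_weights}.1, this yields $\norm{\nu_q^\top X^{(t)}}_2 \geq c_q\epsilon$, and the equality stated in the proposition is the tightness of this bound, realized when $u$ aligns with the left-singular direction of $M_q^{(t)}$ achieving $\sigma_{\min}(M_q^{(t)}) = \epsilon$.

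For the limiting strict inequality under Assumption~\ref{assumption:res_weights}.2, I would exploit that $M_q^{(t)}$ converges to a matrix $M_q^\infty$ that is an infinite geometric sum of non-canceling layer products weighted by $\alpha(1-\alpha)^m\lambda_q^m$. Each tail summand contributes positively along the critical singular direction, so either a Weyl-type inequality on singular values of sums or a continuity argument on $\sigma_{\min}$ upgrades the finite-$t$ bound to a strict inequality $\sigma_{\min}(M_q^\infty) > \epsilon$, and hence $\lim_{t\to\infty}\norm{\nu_q^\top X^{(t)}}_2 > c_q\epsilon$.

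The hard part will be precisely this last upgrade from $\geq$ to strict $>$ in the limit. The singular-value assumption alone only guarantees a lower bound at each $t$, so one must rule out weight sequences where the infinite sum degenerates to its minimum. This is the step where Assumption~\ref{assumption:res_weights}.2 (convergence of the full weight product) does genuine work beyond Assumption~\ref{assumption:res_weights}.1, and I would formalize it via a strict-monotonicity argument on the singular values of the partial sums rather than a direct closed-form computation.
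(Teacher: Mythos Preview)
Your core approach matches the paper's exactly: the paper's entire proof is the single line ``follows directly from the form~(\ref{eq: res_proj_to_espace}),'' and you correctly identify that identity $\nu_r^\top X^{(t)} = (\nu_r^\top X^{(0)})\,M_r^{(t)}$ as doing all the work. The $V^0$ case and the $\geq c_q\epsilon$ bound for $V^\star$ are both immediate from it, exactly as you write them.

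Where you go astray is in taking the ``$=$'' and the strict ``$>$'' literally and then building machinery to justify them. The companion deterministic result (Proposition~\ref{app: prop: res_conn_mu}) uses the same identity and concludes only $\|\nu_q^\top X^{(t)}\|_2 \geq c\epsilon$; the equality sign here is almost certainly a typo for $\geq$. Your tightness argument---that equality is ``realized when $u$ aligns with the left-singular direction''---does not salvage the statement as written, since that alignment would have to hold for \emph{every} $\nu_q\in V^\star$ and \emph{every} $t$ simultaneously, which it generically does not. Likewise, the strict ``$>$'' in the limit is not derivable from Assumption~\ref{assumption:res_weights}.2 alone (which only asserts $\sigma_{\min}\geq\epsilon$ for the limit matrix), and your proposed Weyl/monotonicity upgrade would require additional structure on the weight sequence that is nowhere assumed. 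The paper's one-line proof makes no attempt at either refinement, which is further evidence that the intended conclusions are the weak inequalities you already established.

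In short: your first two paragraphs are the whole proof; the last two are an unnecessary attempt to rescue what are most plausibly typographical slips in the statement.
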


\begin{proof}
    The proof follows directly from the form~(\ref{eq: res_proj_to_espace}).
\end{proof} 

The above result states that the signal excited in the original graph input $X^{(0)}$ is precisely what stays and the signal that is not excited in $X^{(0)}$ can never be created through message-passing. 

We give the following concrete example of the above result:

\paragraph{Example}
Suppose $W_1^{(t)}, W_2^{(t)} = I$, then
\[ X^{(t+1)} = \left(\alpha\sum_{k=0}^{t}(1-\alpha)^k A^k + (1-\alpha)^{t+1}A^{t+1}\right)X\,.\]

Note that when $\rho(A) < 1/(1-\alpha)$ such as $A = D^{-1/2}A_{\text{adj}}D^{-1/2}$, as $t\to\infty$,

\[ \underset{t\to\infty}{\lim} X^{(t)} = \alpha(I_n - (1-\alpha)A)^{-1}X\,.\]

Let $(\lambda_i, \nu_i)$ be the $i$-th eigenpair of $A$ and $\sigma_{l,i} = \langle v_l, X_{:,i}\rangle$ , then
    \[X^{(t+1)}_{:,i} = \sum_{l=1}^n \left(\sum_{k=1}^t \alpha (1-\alpha)^k\lambda_l^k + (1-\alpha)^{t+1}\lambda_l^{t+1}\right)\sigma_{l,i}\nu_l\,,\]
and when $\rho(A) < 1/(1-\alpha)$ such as $A = D^{-1/2}A_{\text{adj}}D^{-1/2}$, 

\[\underset{t\to\infty}{\lim}X^{(t+1)}_{:,i} = \sum_{l=1}^n \frac{\alpha}{1-(1-\alpha)\lambda_l} \sigma_{l,i}\nu_l\,.\]

This implies that for all $\nu_q$, 

\begin{align*}
\|\nu_q^\top X^{(t)}\|_2 & = \sqrt{\sum_{i=1}^k \left(\left(\sum_{m=1}^{t-1}\alpha(1-\alpha)^m\lambda_q^m + (1-\alpha)^{t}\lambda_q^{t}\right)\sigma_{q,i}\right)^2}\\
& = \left(\sum_{m=1}^{t-1}\alpha(1-\alpha)^m\lambda_q^m + (1-\alpha)^{t}\lambda_q^{t}\right) \|\sigma_{q,:}\|_2
\end{align*}

and as $t\to\infty$, 

\[\lim_{t\to\infty} \|\nu_q^\top X^{(t)}\|_2 = \frac{\alpha}{1-(1-\alpha)\lambda_q}\|\sigma_{q,:}\|_2\,. \]

\subsection{Proof of \cref{prop:resconreachability}}
\RESCONREACHABILITY*
\begin{proof}
    Set $T = n$, $\alpha = 0.5$ and $W^{(t)}_1 = I$ for $t>0$ and $W^{(0)}_1 = \mathbb{0}$.
    Begin by unrolling the recursive equation~(\ref{eq: res_def}):
    \begin{equation*}
        X^{(1)} = (1-\alpha) A X^{(0)} W_1^{(0)} + \alpha X^{(0)} W^{(0)}_2 
    \end{equation*}
    And in turn:
    \begin{equation*}
    \begin{aligned}
        X^{(2)} &= (1-\alpha) A X^{(1)} W_1^{(1)} + \alpha X^{(0)} W^{(1)}_2 \\
        &= (1-\alpha) A ((1-\alpha) A X^{(0)} W_1^{(0)} + \alpha X^{(0)} W^{(0)}_2 ) I + \alpha X^{(0)} W^{(1)}_2\\
        &= 0.5^2 A X^{(0)} W^{(0)}_2 + 0.5 X^{(0)} W^{(1)}_2
        \end{aligned}
    \end{equation*}
    Iterating this, yields:
    \begin{equation*}
        X^{(n)} = \sum_{i = 1}^{n} 0.5^{i} A^{i-1} X^{(0)} W^{(i-1)}_2
    \end{equation*}
    Now consider a single column of $X^{(n)}$:
    \begin{equation*}
    \begin{aligned}
        (X^{(n)})_{:,j} &= \sum_{i = 1}^{n} 0.5^{i} A^{i-1} X^{(0)} (W^{(i-1)}_2)_{:,j}\\
        &= \sum_{l=1}^k \sum_{i = 1}^{n} 0.5^{i} (W^{(i-1)}_2)_{l,j} A^{i-1} X^{(0)}_{:,l} \\
    \end{aligned}
    \end{equation*}
    Now let $Y_{:,j} \in \Kr(A, X^{(0)})$ be in the Krylov subspace. Then $Y_{:,j} = \sum_{l=1}^k \sum_{i=1}^n w_{l,i} A^{i-1}X^{(0)}_{:,l}$. Setting $(W_2^{(i-1)})_{l,j} = \frac{w_{l,i}}{0.5^{i}}$ yields $X^{(n)}_{:,j} = Y_{:,j}$. 
    
    For the other direction, begin similiarly by unrolling the recursive equation:
    \begin{equation*}
    \begin{aligned}
        X^{(2)} &= (1-\alpha) A X^{(1)} W_1^{(1)} + \alpha X^{(0)} W^{(1)}_2 \\
        &= (1-\alpha) A ((1-\alpha) A X^{(0)} W_1^{(0)} + \alpha X^{(0)} W^{(0)}_2 ) W_1^{(1)} + \alpha X^{(0)} W^{(1)}_2\\
        &= (1-\alpha)^2 A^2 X^{(0)} W_1^{(0)}W_1^{(1)} + (1-\alpha)\alpha A X^{(0)} W^{(0)}_2 W_1^{(1)} + \alpha X^{(0)} W^{(1)}_2\\
        \end{aligned}
    \end{equation*}
    Iterating this, yields:
        \begin{equation*}
            X^{(n)} = \sum_{i = 1}^{n+1} A^{i-1} X^{(0)} \mathcal{W}^{(i-1)}
    \end{equation*}
    Now consider a single column of $X^{(n)}$:
    \begin{equation*}
    \begin{aligned}
        (X^{(n)})_{:,j} &= \sum_{l=1}^k \sum_{i = 1}^{n+1} A^{i-1} X^{(0)}_{:,l} \mathcal{W}^{(i-1)}_{l,j}
    \end{aligned}
    \end{equation*}
    Now, setting $w_{l,i} = \mathcal{W}^{(i-1)}_{l,j}$ and $Y_{:,j} = \sum_{l=1}^k \sum_{i=1}^{n+1} w_{l,i} A^{i-1}X^{(0)}_{:,l}$ verifies that $X^{(n)}_{:,j} = Y_{:,j} \in \Kr(A, X^{(0)})$.
\end{proof}

\subsection{Proof for~\cref{prop: mu_bn_rw}}
\label{app:mu_bn_rw}
\MUBNNONCOLLAPSERW*

\begin{proof}
    We prove this by induction. 
    The base case for $0$ holds. 
    Assume $\Rank(V_{\neq 0}^\top X^{(t)})$ has rank at least $2$.
    Then, there exist at least $2$ columns $X^{(t)}_{:,i}$ and $X^{(t)}_{:,j}$ such that $\Rank(V_{\neq 0}^\top \begin{bmatrix}
        X^{(t)}_{:,i}, X^{(t)}_{:,j}
    \end{bmatrix}) = 2$. Consider their eigenvector decomposition in terms of eigenvectors of $(I - \1 \1^\top/n)A$:
    \begin{equation*}
        \begin{aligned}
            X^{(t)}_{:,i} = \sum_{l=1}^n \sigma^{(t)}_{l,i} v_l, \quad X^{(t)}_{:,j} = \sum_{l=1}^n \sigma^{(t)}_{l,j} v_l\,.
        \end{aligned}
    \end{equation*}
    Consider the action of $(I - \1 \1^\top/n)A$:
    \begin{equation*}
        \begin{aligned}
            \Tilde{X}^{(t)}_{:,i} = (I - \1 \1^\top/n)A X^{(t)}_{:,i} = \sum_{l=1}^n \lambda_l \sigma^{(t)}_{l,i} v_l, \quad \Tilde{X}^{(t)}_{:,j} =  (I - \1 \1^\top/n)A X^{(t)}_{:,j} = \sum_{l=1}^n \lambda_l \sigma^{(t)}_{l,j} v_l\,.
        \end{aligned}
    \end{equation*}
    Since $X^{(t)}_{:,i}$ and $X^{(t)}_{:,j}$ are linearly independent and the eigenvectors of a symmetric matrix are orthogonal, there exists $q$ such that $\sigma^{(t)}_{q,i} \neq \sigma^{(t)}_{q,j}$ with $\lambda_q \neq 0$.
    This exists because $\Rank(V_{\neq 0}^\top \begin{bmatrix}
        X^{(t)}_{i,:}, X^{(t)}_{j,:}
    \end{bmatrix}) = 2$. It follows that $\sigma^{(t)}_{q,i} \lambda_q \neq \sigma^{(t)}_{q,j} \lambda_q$, and thus the centered features $\Tilde{X}^{(t)}_{:,i} \neq \Tilde{X}^{(t)}_{:,j}$ and neither $\Tilde{X}^{(t)}_{:,i} = \0$ nor $\Tilde{X}^{(t)}_{:,j} = \0$ (otherwise $X^{(t)}_{:,i}$ would be a $0$ eigenvector and be orthogonal to $V_{\neq 0}$). Thus, they are linearly independent. Furthermore, $\Tilde{X}^{(t)}_{:,i}, \Tilde{X}^{(t)}_{:,j}, \1$ are linearly independent, since $(I - \1\1^\top/n)$ projects to the space orthogonal to $\Span\{\1\}$. 
    Write:
    \begin{equation*}
    \begin{aligned}
        X^{(t+1)}_{:,i} &= \frac{1}{\Gamma_i}(I - \1\1^\top/n) A X^{(t)} W^{(t)}_{:,i}\\
        &= \frac{1}{\Gamma_i}\sum_{a=1}^k W^{(t)}_{a,i}\Tilde{X}^{(t)}_{:,a}\\
        &= \frac{1}{\Gamma_i}\sum_{a=1, a \neq i, a \neq j}^k W^{(t)}_{a,i}\Tilde{X}^{(t)}_{:,a} + W^{(t)}_{i,i}\Tilde{X}^{(t)}_{:,i} + W^{(t)}_{j,i}\Tilde{X}^{(t)}_{:,j}\\
        &= \frac{1}{\Gamma_i}(\varphi^{(i)} + W^{(t)}_{i,i}\Tilde{X}^{(t)}_{:,i} + W^{(t)}_{j,i}\Tilde{X}^{(t)}_{:,j})\,.
    \end{aligned}
    \end{equation*}
    We now consider the event that column $i$ collapses to the all-ones space. Notice that dividing the whole column by $\Gamma_i$ does not change whether or not the column has converged to the all-ones space or not. Thus,
    \begin{equation*}
        \begin{aligned}
            \mathcal{A} &= \{W^{(t)} \in \mathbb{R}^{k \times k} \mid X^{(t+1)}_{:,i} = \beta 1\}\\
            &= \{W^{(t)} \in \mathbb{R}^{k \times k} \mid \frac{1}{\Gamma_i}(\varphi^{(i)} + W^{(t)}_{i,i}\Tilde{X}^{(t)}_{:,i} + W^{(t)}_{j,i}\Tilde{X}^{(t)}_{:,j}) = \beta \1\}\\
            &= \{W^{(t)} \in \mathbb{R}^{k \times k} \mid \varphi^{(i)} + W^{(t)}_{i,i}\Tilde{X}^{(t)}_{:,i} + W^{(t)}_{j,i}\Tilde{X}^{(t)}_{:,j} = \beta' \1\}\\
            &= \{W^{(t)} \in \mathbb{R}^{k \times k} \mid  W^{(t)}_{i,i}\Tilde{X}^{(t)}_{:,i} + W^{(t)}_{j,i}\Tilde{X}^{(t)}_{:,j} - \beta' \1 = -\varphi^{(i)} \}\\
        \end{aligned}
    \end{equation*}
    Since $\Tilde{X}^{(t)}_{:,i}, \Tilde{X}^{(t)}_{:,j}, \1$ are linearly independent, given $\varphi^{(i)}$, there is only 1 solution to this equation. $\mathcal{A}$ is a proper hyperplane in $\mathbb{R^{k \times k}}$ and as such has Lebesgue measure $0$. The event $\mathcal{A}$ thus has probability $0$ and the opposite event, that column $i$ does not collapse to the all-ones space, has probability $1$.

    The same holds for $X^{(t+1)}_{:,j}$ and by the same reasoning, $X^{(t+1)}_{:,i}$ and $X^{(t+1)}_{:,j}$ are linearly independent with probability $1$. Notice, that thus $\Rank(V_{\neq 0}^\top \begin{bmatrix}
        X^{(t+1)}_{:,i}, X^{(t+1)}_{:,j}
    \end{bmatrix}) = 2$ still holds. Notice that $\1^T X^{(t)} = 0$ because of the centering operation. However, since $\1^T v = b > 0$:
    \begin{equation*}
    \norm{v^T \begin{bmatrix}
        X^{(t+1)}_{:,i}, X^{(t+1)}_{:,j}
    \end{bmatrix}}_2^2 \leq \norm{\begin{bmatrix}
        X^{(t+1)}_{:,i}, X^{(t+1)}_{:,j}
    \end{bmatrix}}_2^2 -b^2
    \end{equation*}
    This means that:
    \begin{equation*}
    \begin{aligned}
        \mu_v(X^{(t+1)}) &\geq \mu_v(\begin{bmatrix}
        X^{(t+1)}_{:,i}, X^{(t+1)}_{:,j}
    \end{bmatrix})\\
    &= \norm{\begin{bmatrix}
        X^{(t+1)}_{:,i}, X^{(t+1)}_{:,j}
    \end{bmatrix} - v v^T \begin{bmatrix}
        X^{(t+1)}_{:,i}, X^{(t+1)}_{:,j}
    \end{bmatrix}}\\
    & \geq \norm{\begin{bmatrix}
        X^{(t+1)}_{:,i}, X^{(t+1)}_{:,j}
    \end{bmatrix}} - \norm{v v^T \begin{bmatrix}
        X^{(t+1)}_{:,i}, X^{(t+1)}_{:,j}
    \end{bmatrix}}\\
    &\geq b^2
        \end{aligned}
    \end{equation*}
\end{proof}

\subsection{\cref{prop: mu_bn_rw}: general conditions}\label{app:general_condition_mu_bn}

We adopt the following regularity conditions:

\begin{assumption}
\label{assumption:mu_bn}
    For the system described in~(\ref{eq:linear_GNN_w_BN}), assume:
    \begin{enumerate}
        \item For nonzero $x\in\mathbb{R}^n$ such that $x^\top \1 = 0$, $Ax \notin \Span\{\1\}$.
        \item If $\mu_{\1}((AX^{(t)})_{:,i}) > 0$, then $\mu_{\1}((AX^{(t)}W^{(t)})_{:,i}) >0$ for all $i\in[k]$.
    \end{enumerate}
\end{assumption}

\cref{assumption:mu_bn}.1 ensure that the adversarial situation where one step of message-passing automatically leads to oversmoothing to the all-one subspace does not happen. Note that this is a more relaxed condition than requiring $\Span\{\1\}^\perp$ to be an invariant subspace of $A$.

\cref{assumption:mu_bn}.2 ensures that the case where weights are deliberately chosen for oversmoothing to happen in one layer does not happen, either. Note that for the second condition, such weights exist, i.e. let $W^{(t)} = I_k$. Moreover, if weights are randomly initialized, then \cref{assumption:mu_bn}.2 holds almost surely.

We restate~\cref{prop: mu_bn_rw} under general conditions, which accounts for both linear and non-linear systems:

\begin{proposition}
    For the system in~(\ref{eq:linear_GNN_w_BN}), suppose $\sigma(\cdot)$ is injective and~\cref{assumption:mu_bn} holds. Without loss of generality, also suppose the initial features $X^{(0)}$ are centered and all columns are nonzero. Then for $v\in \R^d$ where $v^\top \1 \neq 0$, there exists $c(v) > 0$ such that $\mu_v(X^{(t)}) \geq c(v)\sqrt{k}$ for all $t\geq 0$.
\end{proposition}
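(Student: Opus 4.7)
The plan is to show, by induction on $t$, that every column of $X^{(t)}$ is centered (orthogonal to $\1$) and has unit $L^2$-norm, and then to combine these two properties with a simple Cauchy--Schwarz decomposition of $v$ to get a per-column lower bound on $\mu_v$ that sums to give the result.

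For the induction I first verify that $\BN$ is well-defined at each layer, i.e.\ that no column of $Y^{(t+1)}$ lies in $\Span\{\1\}$. Assume inductively that every column of $X^{(t)}$ is nonzero and centered (which holds for $t=0$ by the hypothesis of the proposition together with the unit-norm assumption on $X^{(0)}$ from \cref{sec:prelim}). Applying \cref{assumption:mu_bn}.1 to each column gives $(AX^{(t)})_{:,i}\notin\Span\{\1\}$, equivalently $\mu_{\1}((AX^{(t)})_{:,i})>0$. \cref{assumption:mu_bn}.2 then propagates this non-constancy through multiplication by $W^{(t)}$. Finally, injectivity of the elementwise $\sigma$ carries it through the activation: if $\sigma(y)\in\Span\{\1\}$ then all coordinates of $\sigma(y)$ are equal, so by injectivity all coordinates of $y$ are equal, i.e.\ $y\in\Span\{\1\}$, a contradiction. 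Hence every column of $Y^{(t+1)}$ has nonzero centered part, $\BN$ does not divide by zero, and each column of $X^{(t+1)}=\BN(Y^{(t+1)})$ satisfies $\1^\top X^{(t+1)}_{:,i}=0$ and $\|X^{(t+1)}_{:,i}\|_2=1$, closing the induction.

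For the bound itself, fix any unit vector $v$ with $v^\top\1\neq 0$ and decompose
\[ v=\frac{v^\top\1}{n}\1+v_\perp,\qquad v_\perp\perp\1,\qquad \|v_\perp\|_2^2=1-\frac{(v^\top\1)^2}{n}. \]
For any column $x=X^{(t)}_{:,i}$ we have $\|x\|_2=1$ and $\1^\top x=0$, so $v^\top x=v_\perp^\top x$ and Cauchy--Schwarz yields $(v^\top x)^2\le\|v_\perp\|_2^2=1-(v^\top\1)^2/n$. Consequently
\[ \|x-vv^\top x\|_2^2 \;=\; \|x\|_2^2-(v^\top x)^2 \;\ge\; \frac{(v^\top\1)^2}{n}. \]
Summing over the $k$ columns,
\[ \mu_v(X^{(t)}) \;\ge\; k\cdot\frac{(v^\top\1)^2}{n} \;\ge\; \sqrt{k}\cdot c(v),\qquad c(v):=\frac{(v^\top\1)^2}{n}>0, \]
which is the claimed bound (using $k\ge\sqrt{k}$ for $k\ge 1$).

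The only nontrivial step is the inductive maintenance of the \emph{centered, unit-norm column} invariant; once that is in hand, the rest is a one-line Cauchy--Schwarz computation. The invariant is precisely what \cref{assumption:mu_bn} together with injectivity of $\sigma$ is engineered to guarantee, so the main obstacle reduces to a careful verification that these hypotheses rule out every mechanism by which $\BN$ could fail or a column of $Y^{(t+1)}$ could become proportional to $\1$.
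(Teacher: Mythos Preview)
Your proof is correct and follows essentially the same approach as the paper: establish by induction that every column of $X^{(t)}$ is centered and unit-norm (using \cref{assumption:mu_bn} and injectivity of $\sigma$ exactly as the paper does), then decompose $v$ into its $\1$-component and orthogonal complement and bound $(v^\top x)^2$ by $\|v_\perp\|_2^2$ via Cauchy--Schwarz. Your justification of why injectivity of the elementwise $\sigma$ preserves non-constancy is in fact slightly more explicit than the paper's, but the structure and the resulting constant $c(v)=(v^\top\1)^2/n$ match the paper's computation (where $c=v^\top\1/\sqrt{n}$).
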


In particular, the most relevant case of our interest is that the message-passing $A$ corresponds to an graph operator such as the adjacency matrix (in the case of GIN), or the symmetric random walk matrix $D^{-1/2}AD^{-1/2}$ (in the case of GCN) and $v$ is its dominant eigenvector. Under the assumption that the graph is connected, Perron-Frobenius Theorem implies $v$ is positive and hence $v^\top \1 > 0$ holds.

We prove the statement below:

\begin{proof}

For each column in $X^{(0)}$, since it is centered and nonzero, $X^{(0)}_{:,i}\in \Span\{\1\}^\perp\backslash \{\0\}$ for all $i\in[k]$. Then given~\cref{assumption:mu_bn}.1, $AX^{(0)}_{:,i}\notin \Span\{\1\}$ and thus can be written as \[AX^{(0)}_{:,i} = a\1 + w_0^\perp\,,\]
where $a\in\mathbb{R}, w_0^\perp \in \Span\{\1\}^\perp$ and $w_0^\perp \neq \0$. 

Then given~\cref{assumption:mu_bn}.2, $AX^{(0)}W^{(0)}_{:,i}\notin \Span\{\1\}$  and since $\sigma(\cdot)$ is injective, $\sigma(AX^{(0)}W^{(0)}_{:,i})\notin \Span\{\1\}$ and 

\[\sigma(AX^{(0)}W^{(0)}_{:,i}) = b\1 + w_1^\perp\,,\]
where $b\in\mathbb{R}, w_1^\perp \in \Span\{\1\}^\perp$ and $w_1^\perp \neq \0$.

Then after the centering step of batch normalization, 
\[(I-\1\1^\top/n)\sigma(AX^{(0)}W^{(0)}_{:,i}) = w_1^\perp\,,\]

and after the scaling step of batch normalization, 

\[X^{(1)}_{:,i} = \BN(\sigma(AX^{(0)}W^{(0)}_{:,i})) = \frac{w_1^\perp}{\|w_1^\perp\|_2}\,,\]

which means each column of $X^{(1)}\in \Span\{\1\}^\perp\backslash\{\0\}$ and has norm $1$. Note that the above argument applies for all $t\geq 0$ going from $X^{(t)}$ to $X^{(t+1)}$.

Let $v = c\1/\sqrt{n} + w_v^\perp$, where $c\neq 0, w_v^\perp \in \Span\{\1\}^\perp$ and $\|w_v^\perp\|_2^2 = 1-c^2$.
Notice that 
\begin{align*}
    \mu^2_v(X^{(t)}) & = \sum_{i=1}^k\|X^{(t)}_{:,i} - vv^\top X^{(t)}_{:,i}\|_2^2  =
    \sum_{i=1}^k\|X^{(t)}_{:,i}\|_2^2 - \|vv^\top X^{(t)}_{:,i}\|_2^2 \\
    & = \sum_{i=1}^k\|X^{(t)}_{:,i}\|_2^2 - \|v(w_v^\perp)^\top X^{(t)}_{:,i}\|_2^2\\
    & \geq \sum_{i=1}^k 1 - \|w_v^\perp\|^2_2 = kc^2\,,
\end{align*}
where the last inequality is because that $\|v(w_v^\perp)^\top X^{(t)}_{:,i}\|_2^2$ is maximized if $X^{(t)}_{:,i}$ and $w_v$ is aligned. 
We thus conclude that there exists $c(v) > 0$ such that $\mu_v(X^{(t)}) \geq c(v)\sqrt{k}$ for all $t\geq 0$.


\end{proof}

\subsection{Proof for \cref{thm:claim_convergence}}
\LINEARSYSTEMCONVERGENCE*
\begin{proof} We will prove by induction on $t$ that 
\begin{equation}
\label{eq:claim_convergence_induction_hypothesis}
    X^{(t)}_{:, i} = \frac{1}{\Gamma^{(t)}}\sum_{l = 1}^n \sigma^{(t)}_{l,i} \lambda_l^t \nu_l
\end{equation}
where $(\lambda_i, \nu_i)$ is the $i$-th eigenpair of $(I_n - \1\1^\top/n) A$ with $|\lambda_1| \geq |\lambda_2| \geq \cdots \geq |\lambda_n|$, $\Gamma^{(t)}$ is the normalization factor in the $t$-th round and $\sigma_{l,i}^{(t)} \in \mathbb{R}$. The base case follows from the decomposition of $X^{(0)}$ in the eigenvector basis of $(I_n - \1\1^T/n) A$: $X^{(0)}_{:, i} = \sum_{l = 1}^n \langle X^{(0)}_{:, i}, \nu_l\rangle \nu_l$ (and the fact that $X^{(0)}$ is normalized).\\
For the induction step, the system can be rewritten as 
\begin{equation*}
    \begin{aligned}
            X^{(t+1)} &= (AX^{(t)}W^{(t)} - 
            \1\1^\top AX^{(t)}W^{(t)}/n
            ) \operatorname{diag}(..., \frac{1}{\operatorname{var}((AX^{(t)}W^{(t)})_{:,i})}, ...))\\
            &= (I_n - \1 \1^\top/n) AX^{(t)}W^{(t)} D_{\operatorname{var}}\,.
    \end{aligned}
\end{equation*}

Assuming~(\ref{eq:claim_convergence_induction_hypothesis}), 
\begin{equation*}
    ((I_n - \1 \1^\top/n) AX^{(t)})_{:,i} = \frac{1}{\Gamma^{(t)}}\sum_{l = 1}^n \sigma^{(t)}_{l,i} \lambda_l^{t+1} \nu_l\,.
\end{equation*}
Further the action of $W^{(t)}$ is the following:\vspace{-3ex}
\begin{equation}
\label{eq:W_action}
    \begin{aligned}
        ((I_n - \1 \1^\top/n) AX^{(t)}W^{(t)})_{:,i} &= \frac{1}{\Gamma^{(t)}}\sum_{j=1}^k{W^{(t)}_{j,i}}\sum_{l = 1}^n \sigma^{(t)}_{l,j} \lambda_{l}^{t+1} \nu_{l}\\
        &=\frac{1}{\Gamma^{(t)}}\sum_{l = 1}^n \sum_{j=1}^k({W^{(t)}_{j,i}}\sigma^{(t)}_{l,j}) \lambda_{l}^{t+1} \nu_{l}\\
        &= \frac{1}{\Gamma^{(t)}}\sum_{l = 1}^n\sigma^{(t+1)}_{l,i} \lambda_{l}^{t+1} \nu_{l}\,.
    \end{aligned}
\end{equation}
Notice that $\sigma^{(t+1)}_{l,i} = \sum_{j=1}^k({W^{(t)}_{j,i}}\sigma^{(t)}_{l,j})$ or equivalently, $\Sigma^{(t+1)} = \Sigma^{(t)}W^{(t)}$, where $\Sigma^{(t)} = [\sigma^{(t)}_{l,i}]$. Thus, $\Sigma^{(t+1)} = \Sigma^{(0)}W^{(0)}W^{(1)} \cdots W^{(t)} $.
Lastly, the action of $D_{\operatorname{var}}$ is:

\begin{equation*}
\begin{aligned}
        ((I_n - \1 \1^\top /n) AX^{(t)}W^{(t)}D_{\operatorname{var}})_{:,i} &= \frac{1}{||\frac{1}{\Gamma^{(t)}}\sum_{l = 1}^n\sigma^{(t+1)}_{l,i} \lambda_{l}^{t+1} \nu_{l}||_2}\frac{1}{\Gamma^{(t)}}\sum_{l = 1}^n\sigma^{(t+1)}_{l,i} \lambda_{l}^{t+1} \nu_{l}\\
        &= \frac{1}{|\frac{1}{\Gamma^{(t)}}\sqrt{\sum_{l = 1}^n(\sigma^{(t+1)}_{l,i} \lambda_{l}^{t+1})^2}|}\frac{1}{\Gamma^{(t)}}\sum_{l = 1}^n\sigma^{(t+1)}_{l,i} \lambda_{l}^{t+1} \nu_{l}\\
        &= \frac{1}{|\sqrt{\sum_{l = 1}^n(\sigma^{(t+1)}_{l,i} \lambda_{l}^{t+1})^2}|}\sum_{l = 1}^n\sigma^{(t+1)}_{l,i} \lambda_{l}^{t+1} \nu_{l}\\
        &= \frac{1}{\Gamma^{(t+1)}}\sum_{l = 1}^n\sigma^{(t+1)}_{l,i} \lambda_{l}^{t+1} \nu_{l}\,.
\end{aligned}
\end{equation*}
This concludes the induction. Notice that superscripts in brackets do not denote exponentiation, but rather an iterate at iteration $t$. Using this intermediate result~(\ref{eq:claim_convergence_induction_hypothesis}), we show the following:
\begin{lemma}
    For all $q > k$, 
    \begin{equation*}
        ||\nu_q^\top X^{(t)} ||_2 \leq C_0\left(\frac{\lambda_q}{\lambda_{k}}\right)^t\,.
    \end{equation*} 
\end{lemma}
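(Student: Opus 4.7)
The plan is to exploit the explicit spectral formula for $X^{(t)}_{:,i}$ already established in the induction hypothesis~(\ref{eq:claim_convergence_induction_hypothesis}) together with the linear dependence of lower rows of $\Sigma^{(t)}$ on its top $k$ rows, a dependence which is forced by the rank-$k$ assumption on $V_k^\top X^{(0)}$ and which is preserved under the updates. First I would use orthonormality of the eigenbasis $\{\nu_l\}$ to project the formula onto $\nu_q$, getting
\[
\nu_q^\top X^{(t)}_{:,i} \;=\; \frac{\sigma^{(t)}_{q,i}\,\lambda_q^t}{\Gamma^{(t)}_i}, \qquad \Gamma^{(t)}_i \;=\; \Big(\sum_{l=1}^n (\sigma^{(t)}_{l,i})^2 \lambda_l^{2t}\Big)^{1/2},
\]
where the column-dependent normalization $\Gamma^{(t)}_i$ is just the factor produced by the BatchNorm scaling step that appears in the induction.

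Next I would exploit the recursion $\Sigma^{(t)} = \Sigma^{(0)} W^{(0)}\cdots W^{(t-1)}$ visible from~(\ref{eq:W_action}). Because $\Sigma^{(0)}_{1:k,\,:} = V_k^\top X^{(0)} \in \mathbb{R}^{k\times k}$ has rank $k$, it is invertible, so for every $q>k$ there is a fixed coefficient vector $a_q := \big(\Sigma^{(0)}_{1:k,\,:}\big)^{-\top}\big(\Sigma^{(0)}_{q,\,:}\big)^\top \in \mathbb{R}^k$ with $\Sigma^{(0)}_{q,\,:} = a_q^\top \Sigma^{(0)}_{1:k,\,:}$. Right-multiplying by $W^{(0)}\cdots W^{(t-1)}$ preserves this relation, so $\Sigma^{(t)}_{q,\,:} = a_q^\top \Sigma^{(t)}_{1:k,\,:}$ for every $t$. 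Cauchy--Schwarz then gives, for each column $i$,
\[
(\sigma^{(t)}_{q,i})^2 \;\le\; \|a_q\|_2^{\,2}\sum_{l=1}^{k}(\sigma^{(t)}_{l,i})^2.
\]

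With this in hand, I would bound the denominator from below by keeping only the top $k$ spectral terms, $\Gamma^{(t)}_i \ge |\lambda_k|^t\big(\sum_{l\le k}(\sigma^{(t)}_{l,i})^2\big)^{1/2}$, and then divide, obtaining
\[
\big|\nu_q^\top X^{(t)}_{:,i}\big| \;\le\; \frac{\|a_q\|_2\big(\sum_{l\le k}(\sigma^{(t)}_{l,i})^2\big)^{1/2}|\lambda_q|^t}{|\lambda_k|^t\big(\sum_{l\le k}(\sigma^{(t)}_{l,i})^2\big)^{1/2}} \;=\; \|a_q\|_2\,(\lambda_q/\lambda_k)^t.
\]
Summing the squares over $i\in[k]$ yields $\|\nu_q^\top X^{(t)}\|_2 \le \sqrt{k}\,\|a_q\|_2\,(\lambda_q/\lambda_k)^t$, so $C_0 := \sqrt{k}\,\|a_q\|_2$ (or $\sqrt{k}\max_{q>k}\|a_q\|_2$ if a uniform constant is desired) works; note $\|a_q\|_2$ is finite since $\Sigma^{(0)}_{1:k,\,:}$ is invertible and the columns of $X^{(0)}$ are unit-normalized.

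The main obstacle I anticipate is the degenerate case in which the common factor $\big(\sum_{l\le k}(\sigma^{(t)}_{l,i})^2\big)^{1/2}$ vanishes for some $(t,i)$: by the linear relation one then also has $\sigma^{(t)}_{q,i} = 0$, so both numerator and denominator are zero and the BatchNorm step itself is ill-defined rather than the bound failing. This is a side condition on $W^{(t)}$ (automatically satisfied when the weight products retain rank $k$, in particular almost surely under Gaussian initialization) and needs to be assumed implicitly for the dynamical system in~(\ref{eq:linear_GNN_w_BN}) to make sense; in that regime the bound above holds deterministically, giving exponential convergence with rate $(|\lambda_q|/|\lambda_k|)^t < 1$ for every $q>k$, which is what the outer Proposition~\ref{thm:claim_convergence} needs.
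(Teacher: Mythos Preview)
Your proof is correct and takes essentially the same approach as the paper: both exploit the invertibility of $\Sigma^{(0)}_{1:k,:}$ to express row $q$ of $\Sigma^{(t)}$ as a fixed linear combination of its top $k$ rows (your $a_q$ is exactly the paper's $\sigma^{(\bot)}_{q,:}$), then apply Cauchy--Schwarz on the numerator and the eigenvalue ordering $|\lambda_l|\ge|\lambda_k|$ on the denominator to cancel the common factor. Your presentation is slightly more direct in that it works with $\sigma^{(t)}_{l,i}$ throughout rather than introducing the cumulative weight product $\mathcal{W}^{(t)}$, but the argument and the resulting constant $C_0=\sqrt{k}\,\|a_q\|_2$ coincide.
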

Proving this directly yields \cref{thm:claim_convergence}. 
We have that $\Sigma^{(0)} = V^\top X^{(0)}$ due to the base case of the induction and by assumption, $\Sigma^{(0)}_{:k,:} = V_k^\top X^{(0)} \in \mathbb{R}^{k \times k}$ has rank $k$ and is therefore full rank. It is therefore invertible, meaning that there exists $(\Sigma_{:k,:}^{(0)})^{-1}\in\mathbb{R}^{k \times k}$ such that $V_k^\top X^{(0)} (\Sigma_{:k,:}^{(0)})^{-1} = I_k$.



We can thus write for the simplicity of notation:
\begin{equation*}
    \Sigma^{(0)} = \Sigma^{(0)} (\Sigma_{:k,:}^{(0)})^{-1} \Sigma_{:k,:}^{(0)}= \Sigma^{(\bot)}  W^{(\bot)} = \begin{bmatrix}
I_k\\
\Sigma^{(\bot)}_{(k+1):, :}
\end{bmatrix}  W^{(\bot)}\,.
\end{equation*}


This has the nice property that $\sigma^{(\bot)}_{i,i} = 1$ for $i \leq k$.
Now, let's revisit~(\ref{eq:W_action}) in that we can write $\Sigma^{(t+1)} = \Sigma^{(0)}W^{(0)}W^{(1)} \cdots W^{(t)} $. 
Let $\mathcal{W}^{(t)} = W^{(\bot)}W^{(0)} \cdots W^{(t)} $, meaning that $\sigma_{i,j}^{(t)} = \sum_{l=1}^k \mathcal{W}^{(t)}_{i,l} \sigma^{(\bot)}_{l,j}$. We now have everything to conclude the proof:


Consider now, the contribution of an eigenvector $\nu_q$ with $q > k$ at iteration $t$. 
  \begin{equation*}
        \begin{aligned}
            ||\nu_q^\top X^{(t)} ||_2 &= \sqrt{\sum_{i=1}^k (\nu_q^\top X^{(t)}_{:,i} )^2}
            = \sqrt{\sum_{i=1}^k (\frac{1}{\Gamma_i^{(t)}}\sum_{l = 1}^n\sigma^{(t)}_{l,i} \lambda_{l}^{t} \nu_q^\top \nu_{l})^2}
            = \sqrt{\sum_{i=1}^k (\frac{1}{\Gamma_i^{(t)}}\sigma^{(t)}_{q,i} \lambda_{q}^{t})^2}\\
            & = \sqrt{\sum_{i=1}^k \frac{(\sigma^{(t)}_{q,i} \lambda_{q}^{t})^2}{\sum_{p = 0}^n(\sigma^{(t)}_{p,i} \lambda_{p}^{t})^2}} 
             = \sqrt{\sum_{i=1}^k \frac{(\sum_{l=1}^k \mathcal{W}^{(t)}_{l,i} \sigma^{(\bot)}_{q,l} \lambda_{q}^{t})^2}{\sum_{p = 1}^n(\sum_{l=1}^k \mathcal{W}^{(t)}_{l,i} \sigma^{(\bot)}_{p,l} \lambda_{p}^{t})^2}} \\
            & \leq \sqrt{\sum_{i=1}^k \frac{(\sum_{l=1}^k \mathcal{W}^{(t)}_{l,i} \sigma^{(\bot)}_{q,l} \lambda_{q}^{t})^2}{\sum_{p = 1}^k(\mathcal{W}^{(t)}_{p,i} \sigma^{(\bot)}_{p,p} \lambda_{p}^{t})^2}} 
             = \sqrt{\sum_{i=1}^k \frac{(\sum_{l=1}^k \mathcal{W}^{(t)}_{l,i} \sigma^{(\bot)}_{q,l} \lambda_{q}^{t})^2}{\sum_{p = 1}^k(\mathcal{W}^{(t)}_{p,i} \lambda_{p}^{t})^2}} \\
             & \leq \sqrt{\sum_{l=1}^k (\sigma_{q,l}^{(\bot)})^2\sum_{i=1}^k \frac{\sum_{l=1}^k (\mathcal{W}^{(t)}_{l,i}  \lambda_{q}^{t})^2}{\sum_{p = 1}^k(\mathcal{W}^{(t)}_{p,i} \lambda_{p}^{t})^2}} \\
            & \leq \sqrt{\sum_{l=1}^k (\sigma_{q,l}^{(\bot)})^2\sum_{i=1}^k \frac{\sum_{l=1}^k (\mathcal{W}^{(t)}_{l,i}  \lambda_{q}^{t})^2}{\sum_{p = 1}^k(\mathcal{W}^{(t)}_{p,i} \lambda_{k}^{t})^2}} 
             = \sqrt{\sum_{l=1}^k (\sigma_{q,l}^{(\bot)})^2 k} \frac{\lambda_{q}^{t}}{\lambda_{k}^{t}}\\
            & \leq C_0 \frac{\lambda_{q}^{t}}{\lambda_{k}^{t}}\,.
        \end{aligned}
    \end{equation*}
As we have that $\lambda_q < \lambda_k$ by construction, $\|\nu_q^\top X^{(t)} ||_2 \rightarrow 0$ exponentially as $t \rightarrow \infty$.
\end{proof}

\subsection{Proof for \cref{thm:convergence_is_tight}}
\LINEARSYSTEMCONVERGENCETIGHT*
\begin{proof} 
The proof idea is simple: we use Gaussian elimination to cancel out all ``top-k" eigenvectors but the one in that row and then use the power iteration until the smaller eigenvectors are ``drowned out" below the $\epsilon$ error margin. Assuming the columns of $X^{(0)}_{:k, :}$ are linearly independent (which is the case if it has rank $k$), this leads to the desired output:
choose \begin{equation*}
W^{(0)}_{j,i} = \begin{cases}
    1 \quad & \text{if } j = i \\
    - \frac{\sigma^{(0)}_{1,i}}{\sigma^{(0)}_{1,1}} & \text{if } j = 1 \text{ and } i \neq 1\\
    0 & \text{else} 
\end{cases}
\end{equation*}
Then from~(\ref{eq:W_action}) for each column $X_{:, i}$, we get that
\begin{equation*}
    X^{(1)}_{:, i} = \frac{1}{\Gamma_i^{(1)}}\sum_{l = 1}^n \sum_{j=1}^k({W^{(0)}_{j,i}}\sigma^{(0)}_{l,j}) \lambda_{l}^{1} \nu_{l} = \frac{1}{\Gamma_i^{(1)}}\sum_{l = 1}^n (\sigma^{(0)}_{l,i} - \frac{\sigma^{(0)}_{1,i}}{\sigma^{(0)}_{1,1}} \sigma^{(0)}_{l, 1}) \lambda_{l}^{1} \nu_{l}\,.
\end{equation*}
Now for $l = 1$, the factor $\sigma^{(0)}_{l,i} - \frac{\sigma^{(0)}_{1,i}}{\sigma^{(0)}_{1,1}} \sigma^{(0)}_{l, 1} = 0$ yields
\begin{equation*}
    X^{(1)}_{:, i} = \frac{1}{\Gamma_i^{(1)}}\sum_{l = 2}^n (\sigma^{(0)}_{l,i} - \frac{\sigma^{(0)}_{1,i}}{\sigma^{(0)}_{1,1}} \sigma^{(0)}_{l, 1}) \lambda_{l}^{1}\,.
\end{equation*}
Iterating this idea and choosing 
\begin{equation*}
W^{(k)}_{j,i} = \begin{cases}
    1 \quad & \text{if } j = i \\
    - \frac{\sigma^{(k)}_{k+1,i}}{\sigma^{(k)}_{k+1,k+1}} & \text{if } j = k+1 \text{ and } i \neq k+1\\
    0 & \text{else} \,,
\end{cases}
\end{equation*}
we arrive at
\begin{equation*}
    X^{(k)}_{:, i} = \frac{1}{\Gamma_i^{(k)}}\left(\sigma^{(i-1)}_{i,i} \lambda_i^k \nu_i + \sum_{l = k+1}^n \sigma^{(k)}_{l,i} \lambda_l^{k} \nu_l\right)\,.
\end{equation*}

Now, we switch gears and use $W^{(t)} = I_n$ for $t \geq k$, it follows that
\begin{equation*}
\begin{aligned}
        ||\nu_i^\top (X^{(t)}_{:, i})||_2 &= \frac{\sigma^{(i-1)}_{i,i} \lambda_i^t}{\sqrt{(\sigma^{(i-1)}_{i,i} \lambda_i^t)^2  + \sum_{l = k+1}^n (\sigma^{(k)}_{l,i} \lambda_l^{t})^2 }}\\
        &= \frac{1}{\sqrt{1+ \frac{\sum_{l = k+1}^n (\sigma^{(k)}_{l,i} \lambda_l^{t})^2}{(\sigma^{(i-1)}_{i,i} \lambda_i^t)^2} }}\,.
\end{aligned}
\end{equation*}

The statement left to prove is thus:
\begin{equation*}
\begin{aligned}
    &\quad \epsilon \geq \frac{\sum_{l = k+1}^n (\sigma^{(k)}_{l,i} \lambda_l^{t})^2}{(\sigma^{(i-1)}_{i,i}\lambda_i^t)^2}\\
    &\Longleftarrow \epsilon \geq \frac{(n-k) \max_l(\sigma^{(k)}_{l,i})^2 }{(\sigma_{i,i}^{i-1})^2} \frac{\lambda_{k+1}^{2t}}{\lambda_i^{2t}}\\
    &\Longleftarrow\frac{\log\left(\frac{\epsilon \sigma^{(i-1)}_{i,i}}{(n-k)\max_l(\sigma^{(k)}_{l,j})^2}\right)}{2\log\left(\frac{\lambda_{k+1}}{\lambda_{i}}\right)} \leq t\,.
\end{aligned}
\end{equation*}
Thus, setting $T$ to be larger than this bound yields the desired claim. 
\end{proof}





\subsection{Proof for~\cref{thm:centering_affects_structuralEVs}}
\begin{proposition}
    \label{thm:centering_affects_structuralEVs_restated}
    Let $A \in \mathbb{R}^{n \times n}_{\geq 0}$ be a symmetric non-negative matrix. Let $H \in \{0,1\}^{n \times m}$ such that $AH = HA^\pi$ is the coarsest EP of $A$. Divide the eigenpairs $\mathcal{V} = \{..., (\lambda, \nu), ...\}$ of $A$ into the following two sets: $\mathcal{V}_{\text{struc}} = \{(\lambda, \nu) \in \mathcal{V} \mid \nu = H \nu^\pi\}, \mathcal{V}_{\text{rest}} = \mathcal{V} \backslash \mathcal{V}_{\text{struc}}$.
    Let $\hat{\mathcal{V}} = \{..., (\hat{\lambda}, \hat{\nu}), ...\}$ be eigenpairs of $(I_n -\tau \1\1^\top/n)A$, for $\tau \neq 0$.
    Then 
    \begin{enumerate}
        \item $\mathcal{V}_{\text{rest}} \subset \hat{\mathcal{V}}$.
        \item Assume $\mathcal{V}_{\text{struc}} \neq \{(\lambda, \1)\}$. Let $(\lambda, \nu)$ be the dominant eigenpair of $A$. Then $\nu$ is \textbf{\emph{not}} an eigenvector of $(I_n -\tau \1\1^\top/n)A$.
        \item $\sum_{(\lambda, \nu) \in \mathcal{V}_{\text{struc}}} \lambda > \sum_{(\hat{\lambda}, \hat{\nu}) \in \hat{\mathcal{V}} \backslash \mathcal{V}_{\text{rest}}} \hat{\lambda}$
    \end{enumerate} 
\end{proposition}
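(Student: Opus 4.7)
The plan is to exploit the orthogonality between $\mathrm{Im}(H)$ and $\mathrm{Im}(H)^{\perp} = \ker(H^{\top})$ induced by the coarsest equitable partition, and then to reduce the third claim to a trace computation using part~1.

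For part~1, I would first observe that every non-structural eigenvector is orthogonal to $\1$. The relation $AH = HA^{\pi}$ shows that $\mathrm{Im}(H)$ is an $A$-invariant subspace, and since $A$ is symmetric, so is $\ker(H^{\top})$. Hence one can choose a basis of eigenvectors of $A$ that respects the splitting $\mathbb{R}^n = \mathrm{Im}(H) \oplus \ker(H^{\top})$; the eigenvectors lying in $\mathrm{Im}(H)$ are exactly those of the form $H\nu^{\pi}$, and the remaining ones lie in $\ker(H^{\top})$. Because $\1 = H \1_m \in \mathrm{Im}(H)$, every $\nu \in \mathcal{V}_{\text{rest}}$ satisfies $\1^{\top}\nu = 0$, and a direct computation gives
\[
(I_n - \tau \1 \1^{\top}/n)\, A \nu \;=\; \lambda \nu - \tau \lambda (\1^{\top}\nu)\1/n \;=\; \lambda \nu,
\]
so $(\lambda,\nu) \in \hat{\mathcal{V}}$ with the same eigenvalue.

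For part~2, I suppose toward contradiction that the dominant eigenvector $\nu$ of $A$ satisfies $(I_n - \tau \1 \1^{\top}/n) A \nu = \hat{\lambda} \nu$. Rearranging yields $(\lambda - \hat{\lambda})\,\nu = \tau \lambda (\1^{\top}\nu)\,\1/n$. Because $A$ is symmetric, non-negative, connected, and non-regular, Perron--Frobenius implies that $\nu$ is strictly positive but not proportional to $\1$, so $\1^{\top}\nu > 0$ and $\lambda > 0$. The right-hand side is then a nonzero multiple of $\1$ while the left-hand side is a multiple of $\nu \not\propto \1$, a contradiction.

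For part~3, sums of eigenvalues become traces. By part~1 the eigenvalues contributed by $\mathcal{V}_{\text{rest}}$ are identical on both sides of the difference, so the inequality reduces to
\[
\Tr(A) \,-\, \Tr\!\bigl((I_n - \tau \1 \1^{\top}/n)\,A\bigr) \;=\; \frac{\tau}{n}\,\1^{\top} A \1 \;>\; 0,
\]
which holds since $\tau > 0$ and $\1^{\top} A \1 = \sum_{i,j} A_{ij} > 0$ for any nontrivial non-negative $A$. The main subtlety I foresee is making part~1 rigorous when the restriction of $A$ to $\mathrm{Im}(H)$ and to $\ker(H^{\top})$ share eigenvalues, so that individual eigenvectors of a degenerate eigenspace are not uniquely assigned to $\mathcal{V}_{\text{struc}}$ or $\mathcal{V}_{\text{rest}}$. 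This can be handled by reasoning at the level of eigenspaces rather than chosen bases, which suffices for both the algebraic identity used in part~1 and the trace cancellation used in part~3.
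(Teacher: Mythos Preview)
Your proposal is correct and follows essentially the same approach as the paper: the key fact $\nu\perp\1$ for $\nu\in\mathcal V_{\text{rest}}$ in part~1, the contradiction via $\nu\not\propto\1$ in part~2, and the trace argument in part~3 are exactly what the paper does. Your invariant-subspace framing of part~1 (using that $\mathrm{Im}(H)$ and $\ker(H^{\top})$ are $A$-invariant and $\1\in\mathrm{Im}(H)$) is a cleaner packaging of the paper's more explicit argument, which instead shows the eigenvectors of $A^{\pi}$ are linearly independent and expands $\1_m$ in that basis to reach the same orthogonality conclusion; note also that in part~2 you do not actually need connectivity or strict positivity of $\nu$---as in the paper, $\nu\ge 0$ nonzero (hence $\1^{\top}\nu>0$) together with $\nu\not\propto\1$ suffices.
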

\begin{proof} Let $H \in \{0,1\}^{n \times m}$ indicate the coarsest EP of $A$ ($AH = HA^\pi$). As each node belongs to exactly $1$ class, it holds that $H \1_n = \1_m$. We prove that for any eigenpair $(\lambda, \nu) \in\mathcal{V}_{\text{rest}}$, it holds that $\nu^\top \1 = 0$. From this, the first statment follows quickly: 
\begin{equation*}
    (I_n - \tau\1\1^T/n) A \nu = A \nu - \tau \1 \1^T A \nu/n = \lambda \nu - \tau\lambda \1 \1^T \nu/n = \lambda \nu\,.
\end{equation*}
To prove this, let's look at $A^\pi$. $A^\pi$ is not symmetric, but its eigenpairs are associated to $A$'s eigenpairs in the following way. Let $(\lambda, \nu^\pi)$ be an eigenpair of $A^\pi$, then $(\lambda, H\nu^\pi)$ is an eigenpair of $A$: $A H \nu^\pi = H A^\pi = H \lambda \nu^\pi$. As $A$ is a symmetric matrix, its eigenvectors are orthogonal implying $(H\nu_i^\pi)^\top (H \nu_j^\pi) = 0$.

\begin{lemma}
    The eigenvectors $\nu_1^\pi, ..., \nu_m^\pi$ of $A^\pi$ are linearly independent.
\end{lemma}
For simplicity, choose the eigenvectors $\nu_i^\pi$ to be normalized in such a way, that $(H \nu_i^\pi)^\top (H \nu_i^\pi) = 1$.
Suppose for a contradiction,  that they are linearly dependent and without loss of generality, $\nu_m^\pi = \sum_{i=1}^{m-1} a_i \nu_i^\pi$. Take $j$ such that $a_j \neq 0$, this must exist otherwise $\nu_m = \0$. Now, 
\begin{equation*}
(H \nu_m^\pi)^\top (H \nu_j^\pi) = (H \sum_{i=1}^{m-1} a_i \nu_i^\pi)^\top (H \nu_j^\pi) = \sum_{i=1}^{m-1} (H  a_i \nu_i^\pi)^\top (H \nu_j^\pi) = a_j \neq 0\,,
\end{equation*}
which is a contradiction. 

Since the eigenvectors of $A^\pi$ are linearly independent, there exists a unique $\beta \in \mathbb{R}^m$ s.t. $\sum_{i=1}^m \beta_i \nu_i^\pi = \1_m$. 
Finally let $\varphi \in \mathcal{V}_{\text{rest}}$, 
\begin{equation*}
    \varphi^\top \1_n = \varphi^\top H \1_m = \varphi^\top H \sum_{i=1}^m \beta_i \nu_i^\pi = \sum_{i=1}^m \beta_i \varphi^\top  H   \nu_i^\pi = 0\,.
\end{equation*}
This concludes the proof of the first statement.

To prove the second statement, let $(\lambda, \nu)$ be a dominant eigenpair of $A$, such that $\nu \geq 0$ is non-negative. This exists as $A$ is a non-negative matrix. Additionally, $\nu$ is not the all-zeros vector and as such $\1^\top \nu > 0$. By assumption $\nu \neq \1$. Then:
\begin{equation*}
\begin{aligned}
    (I-\tau \1 \1^\top/n)A\nu &= \hat{\lambda} \nu\iff \lambda \nu -\tau \1 \1^\top \lambda\nu/n &= \hat{\lambda} \nu\iff (\lambda -\hat{\lambda}) \nu -\tau\lambda\1\1^\top \nu/n  &=0\,.\\
\end{aligned}
\end{equation*}
As $\nu \neq \1$, there exist $i,j$ s.t. $\nu_i \neq \nu_j$ thus, $(\lambda - \hat{\lambda})\nu_i  -\tau \lambda(\1\1^\top \nu)_i / n = 0$ and $(\lambda - \hat{\lambda})\nu_j  -\tau \lambda(\1\1^\top \nu)_j / n = 0$ cannot be true at the same time. Thus, this equation has no solution and $\nu$ is not an eigenvector of $(I_n -\tau \1\1^\top/n)A$.

For the third and final statement, notice that the trace of $A$ is larger than the trace of $(I_n -\tau \1\1^\top/n)A$. Since the trace of a matrix is the sum of its eigenvalues, we have
\begin{equation*}
    \sum_{(\lambda, \nu) \in \mathcal{V}} \lambda  = \Tr(A) > \Tr((I_n -\tau \1\1^\top/n)A) =  \sum_{(\hat{\lambda}, \hat{\nu}) \in \hat{\mathcal{V}}} \hat{\lambda}\,.
\end{equation*}
Consequently, 
\begin{equation*}
     \sum_{(\lambda, \nu) \in \mathcal{V}} \lambda = \sum_{(\lambda, \nu) \in \mathcal{V}_{\text{struc}}} \lambda + \sum_{(\lambda, \nu) \in \mathcal{V}_{\text{rest}}} \lambda > \sum_{(\hat{\lambda}, \hat{\nu}) \in \hat{\mathcal{V}} \backslash \mathcal{V}_{\text{rest}}} \hat{\lambda} + \sum_{(\lambda, \nu) \in \mathcal{V}_{\text{rest}}} \lambda = \sum_{(\hat{\lambda}, \hat{\nu}) \in \hat{\mathcal{V}}} \hat{\lambda}\,.
\end{equation*}
Subtracting $\sum_{(\lambda, \hat{\nu}) \in \mathcal{V}_{\text{rest}}} \lambda$ from both sides yields the final statement.

\end{proof}



\newpage
\section{Experiments}\label{app:exp}
We compare these models using the measures of convergence: $\mu_v(X^{(t)})$ as defined in~(\ref{eq:mu}), where in this case for GATs and GINs, the dominant eigenvector $v$ is $\1/\sqrt{n}$, the numerical rank of the features $\Rank(X^{(t)})$,
the column distance used in \citep{Zhao2019PairNormTO}: 
$$
d_{\text{col}}(X) \vcentcolon= \frac{1}{d^2}\sum_{i,j}\norm{\frac{X_{:,i}}{\norm{X_{:,i}}_1} - \frac{X_{:,j}}{\norm{X_{:,j}}_1}}_2,
$$
the column projection distance:
$$
    d_{\text{p-col}}(X) \vcentcolon= \frac{1}{d^2}\sum_{i,j} 1 - \frac{X_{:,i}^\top}{\norm{X_{:,i}}_2} \frac{X_{:,j}}{\norm{X_{:,j}}_2},
$$
and the eigenvector space projection:
$$
d_{\text{ev}}(X) \vcentcolon=  \frac{1}{n} \norm{X - V V^\top X}_F,$$
where $V\in\R^{n\times n}$ is the set of normalized eigenvectors of $A$, and finally, the rank of $X$: $\Rank(X)$. 




\paragraph{Datasets} We provide summary statistics for datasets used in experiments in~\cref{sec:experiments} in~\cref{tab: data}.

\begin{table}[h]
\centering
\caption{The summary statistics of the datasets used in~\cref{sec:experiments}.}
\begin{tabular}{lcccccc}
\toprule
Dataset     & \#graphs            & \#nodes & \#edges & \#features & \# classes\\
\midrule
\textbf{MUTAG}& 188 & $\sim$17.9 & $\sim$39.6 & 7 & 2\\
\textbf{PROTEINS} & 1,113 & $\sim$39.1 & $\sim$145.6 & 3 & 2\\
\textbf{PTC-MR} & 344  & $\sim$14.29  & $\sim$14.69 & 18 & 2 \\
\midrule
\textbf{Cora} & 1 &  2,708 &  10,556 &1,433& 7\\
\textbf{Citeseer} & 1  &3,327 & 9,104 &3,703 & 6\\
\textbf{ogbn-arxiv} & 1  & 169,343 & 1,166,243 & 128 & 40\\

\bottomrule
\end{tabular}
\label{tab: data}
\end{table}

\paragraph{Training details} We perform a within-fold 90\%/10\% train/validation split for model selection. We train the models for $200$ epochs using the AdamW optimizer and search the hyperparameter space over the following parameter combinations: 
\begin{itemize}
    \item learning rate $\in \{10^{-4}, 10^{-3}, 10^{-2}, 10^{-1}\}$
    \item feature size $\in \{32, 64\}$
    \item weight decay $\in \{0,10^{-2}, 10^{-4}\}$
    \item number of layers $\in \{3,5\}$
\end{itemize}
We select the hyperparameters of the model with the best mean validation accuracy over its 30 best epochs. 
The code and all non publicly available data is available \href{https://git.rwth-aachen.de/netsci/2025-residual-connections-and-normalization-can-provably-prevent-oversmoothing}{here}.

\paragraph{Compute}
We ran all of our experiments on a system with two NVIDIA L40 GPUs, two AMD EPYC 7H12 CPUs and 1TB RAM.

\paragraph{Licenses}
\begin{itemize}
    \item PyG~\citep{Fey/Lenssen/2019}: MIT license
    \item OGB~\citep{hu2020ogb}: MIT license
         
    \item ogbn-arxiv: ODC-BY
\end{itemize}

\section{Additional Experimental Results}\label{app: more_exp_results}

\subsection{Long-term Model Performance}

\begin{figure}[H]
\centering\includegraphics[width=0.8\columnwidth]{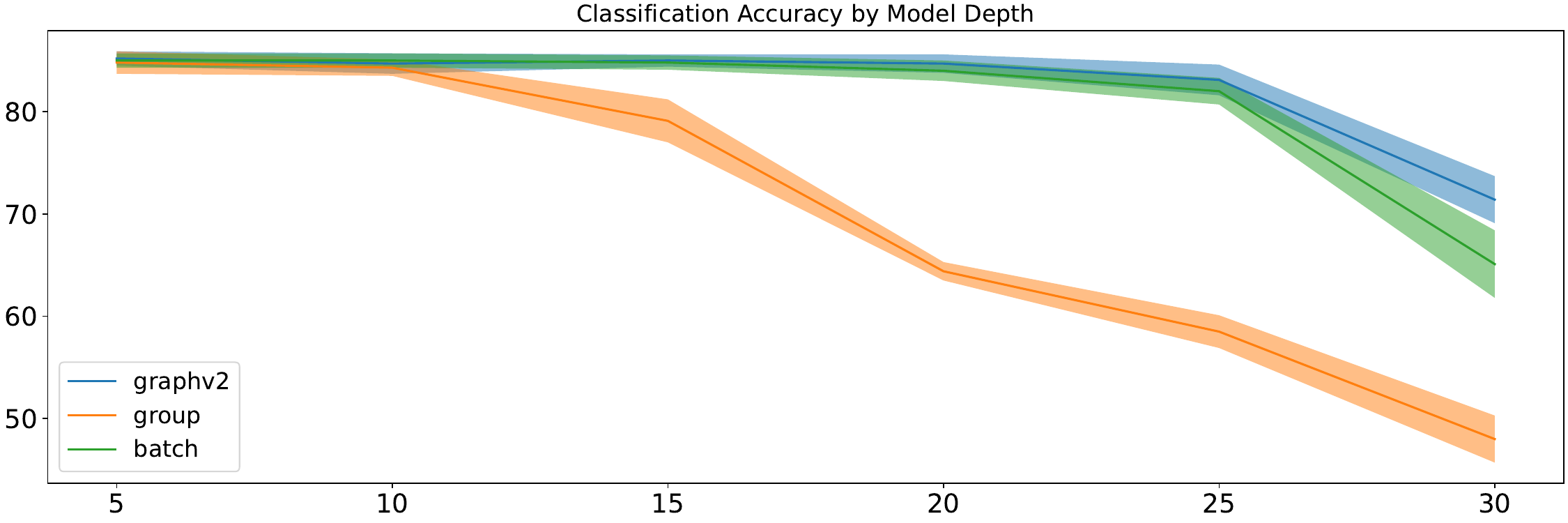}
    \caption{\footnotesize \textbf{Long-term behavior of GCN performance.} Classification accuracy and standard deviation of GCN models of varying depth. The x-axis show the depth of the GCN while the y axis shows classification accuracy and standard deviations. }
    \label{fig:long_term_perf.}
\end{figure}

\subsection{Node Classification Performance on Heterophilic Graphs}

It is worth noting that our theoretical results in the paper hold for any initial features and any target labels and thus hold for datasets exhibiting either homophily or heterophily and even those that are not necessarily either of the two. In particular, the graph-level tasks we present in~\Cref{sec:experiments} can be seen as heterophilic (in terms of nodes features while the targets are neither). We can see in those cases, GraphNormv2 also performs well.

For the node classification task, we conduct additional node classification experiments using GCN as the backbone on the heterophilic dataset Cornell~\cite{Pei2020GeomGCNGG}, using the default splits provided in PyG. The results are shown in~\cref{tab: cornell_classification}. We see that our method, GraphNormv2 still performs competitively with baseline methods and outperforms the baselines in the deep architecture regime.

\begin{table}[h]
    \caption{\small\textbf{Performance on heterophilic graphs.} Performance of  GCN with different normalization layers on the heterogeneous node classification dataset Cornell. Results are reported as the mean accuracy (in \%) $\pm$ std. over 10 independent trials and the 10 fixed splits. Best results are highlighted in blue; second best results are highlighted in gray.}
    \centering \large
    \resizebox{0.7\columnwidth}{!}{
    \begin{NiceTabular}{c c | c  | c }[colortbl-like]
    \toprule
    \multicolumn{2}{c}{} &  Node Classification &  Node Classification ($\#$ layers=20) \\
     \toprule
     && Cornell &  Cornell \\ 
    \toprule
     \multirow{7}{*}{GCN} &no norm & $50.9 \pm 6.2 $ & $47.4 \pm 4.4 $\\
     &batch & $50.1 \pm 4.4 $ &$45.7 \pm 4.9 $\\
     &graph &$\cellcolor{gray!20}52.8 \pm 3.5 $& $48.6 \pm 4.6 $\\
     &pair & $47.0 \pm 5.2 $ & $44.9 \pm 4.0 $\\
     &group &$\cellcolor{best}52.8 \pm 4.0 $& $\cellcolor{gray!20}49.8 \pm 4.4 $\\
     &powerembed & $50.8 \pm 5.4 $& $48.8 \pm 5.4 $\\
     &\graphnorm &$\cellcolor{gray!20}52.2 \pm 4.2 $& $\cellcolor{best}51.0 \pm 4.2$\\
     \bottomrule
    \end{NiceTabular}}
    \label{tab: cornell_classification}
\end{table}

\newpage
\subsection{Node Classification Performance with GraphSage Backbone}

For the node classification task, we conduct additional node classification experiments using GraphSAGE~\cite{Hamilton2017InductiveRL} as the backbone under the same cross-validation setup specified in the paper (see~\Cref{sec:experiments}). The results are shown in~\Cref{tab:graphsage_classification}.

Our method, GraphNormv2 still performs competitively with baseline methods and outperforms the baselines in the deep architecture regime. Note that the message-passing operator for GraphSAGE is a row-stochastic matrix, hence its dominant eigenvector is the all-ones vector. We suspect that this might be the case why BatchNorm and GraphNorm perform so poorly with this backbone, as their centering step uses the all-ones vector and thus distorts the graph signal as discussed in~\cref{sec: centering} of the paper.

\begin{table}[t]
    \caption{\footnotesize\textbf{Performance of GraphSAGE under different normalization layers.} Performance of GraphSAGE with different normalization layers on the node classification dataset Cora. Results are reported as the mean accuracy (in \%) $\pm$ std. over 10 independent trials and 5 folds. Best results are highlighted in blue; second best results are highlighted in gray.}
    \centering \large
    \resizebox{0.7\columnwidth}{!}{
    \begin{NiceTabular}{c c | c  | c }[colortbl-like]
    \toprule
    \multicolumn{2}{c}{} &  Node Classification &  Node Classification ($\#$ layers=20) \\
     \toprule
     && Cora &  Cora \\ 
    \toprule
     \multirow{7}{*}{GraphSAGE} &no norm & $85.0 \pm 0.7 $ & $83.3 \pm 1.2 $\\
     &batch & $32.1 \pm 5.8 $ &$30.1 \pm 3.8 $\\
     &graph &$30.9 \pm 4.7 $& $30.1 \pm 3.8 $\\
     &pair & $32.7 \pm 4.3 $ & $29.7 \pm 5.2 $\\
     &group &$\cellcolor{best}87.2 \pm 0.7 $& \cellcolor{secondbest}$85.6 \pm 0.9 $\\
     &powerembed & $\cellcolor{gray!20}86.3 \pm 0.8 $& $85.2 \pm 0.6 $\\
     &\graphnorm &$85.1 \pm 0.9 $&\cellcolor{best} $85.9 \pm 0.8 $\\
     \bottomrule
    \end{NiceTabular}}
    \label{tab:graphsage_classification}
\end{table}

\end{document}